\renewcommand{\refname}{References}
\renewcommand\bibsection{%
  \section*{{\refname}\@mkboth{\refname}{\refname}}%
}%
\newcommand{\set}[1]{\left\{#1\right\}}
\newcommand{\pr}[1]{\left(#1\right)}
\newcommand{\fpr}[1]{\mathopen{}\left(#1\right)}
\newcommand{\abs}[1]{{\left|#1\right|}}
\newcommand{\enset}[2]{\left\{#1 ,\ldots , #2\right\}}
\newcommand{\enpr}[2]{\pr{#1 ,\ldots , #2}}
\newcommand{\real}{\mathbb{R}}
\newcommand{\funcdef}[3]{{#1}:{#2} \to {#3}}
\newcommand{\define}{\leftarrow}
\DeclareRobustCommand{\dispfunc}[2]{%
	\ensuremath{%
		\ifthenelse{\equal{#2}{}}%
			{\mathit{#1}}%
			{\mathit{#1}\fpr{#2}}}}
\newcommand{\score}[1]{\dispfunc{s}{#1}}
\newcommand{\lab}[1]{\dispfunc{\ell}{#1}}
\newcommand{\weight}[1]{\dispfunc{d}{#1}}
\newcommand{\cweight}[1]{\dispfunc{cd}{#1}}
\newcommand{\auc}[1]{\dispfunc{auc}{#1}}
\newcommand{\lcount}[1]{\dispfunc{lcount}{#1}}
\newcommand{\hm}[1]{\dispfunc{h}{#1}}
\newcommand{\chm}[1]{\dispfunc{ch}{#1}}
\newcommand{\nxt}[1]{\dispfunc{next}{#1}}
\newcommand{\lc}[1]{\dispfunc{left}{#1}}
\newcommand{\rc}[1]{\dispfunc{right}{#1}}
\newcommand{\troot}[1]{\dispfunc{root}{#1}}
\newcommand{\diff}[1]{\dispfunc{d}{#1}}
\newcommand{\cdiff}[1]{\dispfunc{cd}{#1}}
\newcommand{\shift}[1]{\dispfunc{s}{#1}}
\newcommand{\bigO}[1]{\dispfunc{\mathcal{O}}{#1}}
\newcommand{\dtname}[1]{\textsl{#1}}
\newcommand{\algauc}{\textsc{DynAuc}\xspace}
\newcommand{\alghexact}{\textsc{Hexact}\xspace}
\newcommand{\alghapprox}{\textsc{Happrox}\xspace}
\newcommand{\algsubset}{\textsc{Subset}\xspace}
\newcommand{\algsubsetalt}{\textsc{SubsetAlt}\xspace}
\definecolor{yafaxiscolor}{rgb}{0.3, 0.3, 0.3}
\definecolor{yafcolor1}{rgb}{0.4, 0.165, 0.553}
\definecolor{yafcolor2}{rgb}{0.949, 0.482, 0.216}
\definecolor{yafcolor3}{rgb}{0.47, 0.549, 0.306}
\definecolor{yafcolor4}{rgb}{0.925, 0.165, 0.224}
\definecolor{yafcolor5}{rgb}{0.141, 0.345, 0.643}
\definecolor{yafcolor6}{rgb}{0.965, 0.933, 0.267}
\definecolor{yafcolor7}{rgb}{0.627, 0.118, 0.165}
\definecolor{yafcolor8}{rgb}{0.878, 0.475, 0.686}
\newlength{\yafaxispad}
\newlength{\yaftlpad}
\newlength{\yaflabelpad}
\newlength{\yafaxiswidth}
\newlength{\yafticklen}
\def\pgfplots@drawtickgridlines@INSTALLCLIP@onorientedsurf#1{}
\newcommand{\yafdrawaxis}[4]{
	\pgfplotstransformcoordinatex{#1}\let\xmincoord=\pgfmathresult 
	\pgfplotstransformcoordinatex{#2}\let\xmaxcoord=\pgfmathresult 
	\pgfplotstransformcoordinatey{#3}\let\ymincoord=\pgfmathresult 
	\pgfplotstransformcoordinatey{#4}\let\ymaxcoord=\pgfmathresult 
	\pgfsetlinewidth{\yafaxiswidth} 
	\pgfsetcolor{yafaxiscolor}
	\pgfpathmoveto{\pgfpointadd{\pgfpointadd{\pgfplotspointrelaxisxy{0}{0}}{\pgfqpointxy{\xmincoord}{0}}}{\pgfqpoint{-0.5\yafaxiswidth}{\yafaxispad}}}
	\pgfpathlineto{\pgfpointadd{\pgfpointadd{\pgfplotspointrelaxisxy{0}{0}}{\pgfqpointxy{\xmaxcoord}{0}}}{\pgfqpoint{0.5\yafaxiswidth}{\yafaxispad}}}
	\pgfpathmoveto{\pgfpointadd{\pgfpointadd{\pgfplotspointrelaxisxy{0}{0}}{\pgfqpointxy{0}{\ymincoord}}}{\pgfqpoint{\yafaxispad}{-0.5\yafaxiswidth}}}
	\pgfpathlineto{\pgfpointadd{\pgfpointadd{\pgfplotspointrelaxisxy{0}{0}}{\pgfqpointxy{0}{\ymaxcoord}}}{\pgfqpoint{\yafaxispad}{0.5\yafaxiswidth}}}
	\pgfusepath{stroke}
}
\pgfplotsset{axis y line=left, axis x line=bottom,
	tick align=outside,
	tickwidth=\yafticklen,
	clip = false,
    x axis line style= {-, line width = 0pt, color=black!0},
    y axis line style= {-, line width = 0pt, color=black!0},
    x tick style= {line width = \yafaxiswidth, color=yafaxiscolor, yshift = \yafaxispad},
    y tick style= {line width = \yafaxiswidth, color=yafaxiscolor, xshift = \yafaxispad},
    x tick label style = {font=\scriptsize, yshift = \yaftlpad, inner xsep = 0pt},
    y tick label style = {font=\scriptsize, xshift = \yaftlpad},
    every axis y label/.style = {at = {(ticklabel cs:0.5)}, rotate=90, anchor=center, font=\scriptsize, yshift = -\yaflabelpad, inner sep = 0pt},
    every axis x label/.style = {at = {(ticklabel cs:0.5)}, anchor=center, font=\scriptsize, yshift = \yaflabelpad},
    x tick label style = {font=\scriptsize, yshift = 1pt},
    grid = major,
    major grid style  = {dash pattern = on 1pt off 3 pt},
	every axis plot post/.append style= {line width=\yafaxiswidth} ,
	legend cell align = left,
	legend style = {inner sep = 1pt, cells = {font=\scriptsize}},
	legend image code/.code={%
		\draw[mark repeat=2,mark phase=2,#1] 
		plot coordinates { (0cm,0cm) (0.15cm,0cm) (0.3cm,0cm) };%
	} 
}
\begin{document}

\title{Maintaining AUC and $H$-measure over time}

\author{Nikolaj Tatti}

\institute{HIIT, University of Helsinki, Helsinki, Finland\\
\email{nikolaj.tatti@helsinki.fi}}

\date{Received: date / Accepted: date}

\maketitle              

\begin{abstract}

Measuring the performance of a classifier is a vital task in machine learning.
The running time of an algorithm that computes the measure plays a very small
role in an offline setting, for example, when the classifier is being developed by a researcher.
However, the running time becomes more crucial if our goal is to monitor the performance of a classifier
over time.

In this paper
we study three algorithms for maintaining two measures.
The first algorithm maintains area under the ROC curve (AUC) under addition and deletion of data points in $\bigO{\log n}$ time.
This is done by maintaining the data points sorted in
a self-balanced search tree. In addition, we augment the search tree that allows us to query the ROC
coordinates of a data point in $\bigO{\log n}$ time. In doing so we are able to maintain AUC in $\bigO{\log n}$ time.
Our next two algorithms involve in maintaining $H$-measure, an alternative measure based on the ROC curve.
Computing the measure is a two-step process: first we need to compute a convex hull of the ROC curve, followed
by a sum over the convex hull. We demonstrate that we can maintain the convex hull using a minor modification 
of the classic convex hull maintenance algorithm. We then show that 
under certain conditions, we can compute the $H$-measure exactly in $\bigO{\log^2 n}$ time, and if the conditions
are not met, then we can estimate the $H$-measure in $\bigO{(\log n + \epsilon^{-1})\log n}$ time.
We show empirically that our methods are significantly faster than the baselines.

\keywords{AUC, $H$-measure, online algorithm}
\end{abstract}

\section{Introduction}

Measuring the performance of a classifier is a vital task in machine learning.
The running time of an algorithm that computes the measure plays a very small
role in an offline setting, for example, when the classifier is being developed
by a researcher.  However, the running time becomes more crucial if our goal is
to monitor the performance of a classifier over time where the new data points
may arrive at a significant speed.

For example, consider a task of monitoring abnormal behaviour in IT systems
based on event logs.  Here, the main problem is the gargantuan volume of event
logs making the manual monitoring impossible.  One approach is to have a
classifier to monitor for abnormal events and alert analysts for closer
inspection.  Here, monitoring should be done continuously to notice
abnormalities rapidly.  Moreover, the performance of the classifier should also
be monitored continuously as the underlying distribution, and potentially the
performance of the classifier, may change due to the changes in the IT system.

In order to detect recent changes in the performance,
we are often interested in the performance over the last $n$ data points.
More generally, we are interested in maintaining the measure under addition or deletion of data points.

We study algorithms for maintaining two measures. The first measure is the area under the ROC curve (AUC), a classic
technique of measuring the performance of a classifier based on its ROC curve. We also study
$H$-measure, an alternative measure proposed by~\citet{hand:09:alternative}.
Roughly speaking, the measure is based on the minimum weighted loss, averaged over the cost ratio.
A practical advantage of the $H$-measure over AUC is that it allows a natural way of weighting
classification errors.

Both measures can be computed in $\bigO{n \log n}$ time from scratch, or in $\bigO{n}$ time if the data points are already sorted. 
In this paper we present 3 algorithms that allow us to maintain the measures in polylogarithmic time.

The first algorithm maintains AUC under addition or deletion of data points.
The approach is straightforward: we maintain the data points sorted in a
self-balanced search tree.  In order to update AUC we need to know the ROC
coordinates of the data point that we are changing.  Luckily, this can be done
by modifying the search tree so that it maintains the cumulative counts of the labels in
each subtree.  Consequently, we can obtain the coordinates in $\bigO{\log n}$
time, which leads to a total of $\bigO{\log n}$ maintenance time.

Our next two algorithms involve maintaining the $H$-measure. Computing the $H$-measure
involves finding the convex hull of the ROC curve, and enumerating over the hull.
First we show that we can use a classic dynamic convex hull algorithm with some minor
modifications to maintain the convex hull of the ROC curve. The modifications are required
as we do not have the ROC coordinates of individual data points, but we can use the same
trick as when computing AUC to obtain the needed coordinates.

Then we show that if we estimate the class priors from the test data, we can
decompose the $H$-measure into a sum over the points in the convex hull such that
the $i$th term depends only on the difference between the $i$th and the $(i - 1)$st data points.
This decomposition allows us to maintain the $H$-measure in $\bigO{\log^2 n}$ time.

If the class priors are \emph{not} estimated from the test data, then we propose an estimation
algorithm. Here the idea is to group points that are close in the convex hull together. Or in other words,
if there are points in the convex hull that are close to each other, then we only use one data point from such group.
The grouping is done in a way that we maintain $\epsilon$-approximation in $\bigO{(\log n + \epsilon^{-1}) \log n}$ time.

\textbf{Structure:}
The rest of the paper is organized as follows. We present preliminary definitions in Section~\ref{sec:prel}.
In Section~\ref{sec:auc} we demonstrate how to maintain AUC, and in Sections~\ref{sec:hexact}--\ref{sec:happrox}
we demonstrate how to maintain the $H$-measure. We present the experimental evaluation in Section~\ref{sec:exp},
and conclude the paper with a discussion in Section~\ref{sec:conclusions}.

\section{Preliminaries}\label{sec:prel}

Assume that we are given a multiset of $n$ data points $Z$. Each
data point $z = (s, \ell)$ consists of a score $s \in R$ and a true label
$\ell \in \set{1, 2}$.
The score is typically obtained by applying a classifier with
high values implying that $z$ should be classified as class 2. 
To simplify the notation greatly, given $z = (s, \ell)$ we define $\weight{z} = (1, 0)$ if $\ell = 1$,
and $\weight{z} = (0, 1)$ if  $\ell = 2$. We can now write
\[
	(n_1, n_2) = \sum_{z \in Z} \weight{z},
\]
that is, $n_j$ is the number of points having the label equal to $j$.
Here we used a convention
that the sum of two tuples, say $(a, b)$ and $(c, d)$, is $(a + c, b + d)$.
Note that $n = n_1 + n_2$.

Let $S = \enpr{s_1}{s_n}$ be the list of all scores, ordered from the smallest to the largest.
Let us write 
\begin{equation}
\label{eq:rocraw}
	r_i = \sum_{z \in Z, \score{z} \leq s_i} \weight{z},
\end{equation}
that is, $r_i$ are the label counts of points having a score less than or equal to $s_i$.

We obtain the \emph{ROC curve} by normalizing $r_i$ in Eq.~\ref{eq:rocraw}, that is,
the ROC curve is a list of $n + 1$ points $X = (x_0, x_1, \ldots, x_n)$, where
\[
	x_i = (r_{i1} / n_1, r_{i2} / n_2)
\]
and $x_0 = (0, 0)$.  Note that not all points in $X$ are necessarily unique.
The points in $X$ are confined in the unit rectangle of $(0, 1) \times (0, 1)$.
See Figure~\ref{fig:toyauc} for illustration.\footnote{For notational
convenience, we treat the first coordinate as the vertical and the second
coordinate as the horizontal.} 

\begin{figure}
\begin{center}
\begin{tikzpicture}
\begin{axis}[xlabel={}, ylabel= {},
    width = 6.2cm,
    scaled x ticks = false,
    cycle list name=yaf,
    yticklabel style={/pgf/number format/fixed},
    xticklabel style={/pgf/number format/fixed},
	xlabel = {$x_{i2}$},
	ylabel = {$x_{i1}$},
    no markers,
    legend pos = {north west}
    ]
\addplot+[name path=R] coordinates {(0, 0) (0.05, 0.2) (0.2, 0.6) (0.4, 0.8) (0.6, 0.9) (1, 1)} node[pos=0.5, black, sloped, auto] {\small ROC};

\path[name path=axis] (axis cs:0,0) -- (axis cs:1,0);

\addplot[yafcolor5, opacity=0.1] fill between[of=R and axis,soft clip={domain=0:1}];

\node at (axis cs:0.6,0.4) {AUC};

\pgfplotsextra{\yafdrawaxis{0}{1}{0}{1}}
\end{axis}
\end{tikzpicture}
\end{center}

\caption{Example of a ROC curve and AUC. If we consider label 1 as a true label and
label 2 as a false label, then the vertical axis is the true positive rate (TPR)
while the horizonal axis is the false positive rate (FPR).}
\label{fig:toyauc}

\end{figure}
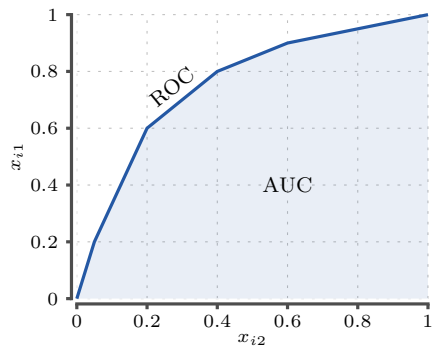

The area under the curve, $\auc{Z}$ is the area below the ROC curve.
If there is a threshold $\sigma$ such that all data points with a score smaller
than $\sigma$ belong to class 1 and all data points with a score larger than
$\sigma$ belong to class 2, then $\auc{Z} = 1$. If the scores are independent
of the true labels, then the expected value of $\auc{Z}$ is $1/2$.

Instead of defining $\auc{Z}$ using the ROC curve, we can also define it directly
with Mann-Whitney $U$ statistic~\citep{mann1947test}. Assume that we are given a multiset of points $Z$.
Let $S_1 = \set{s \mid (s, \ell) \in Z, \ell = 1}$ be a multiset of scores with
the corresponding labels being equal to 1, and define $S_2$ similarly. The Mann-Whitney $U$ statistic is equal to
\begin{equation}
\label{eq:mann}
	U = \sum_{s \in S_1} \sum_{t \in S_2} f(s, t), \quad\text{where}\quad
	f(s, t) =
	\begin{cases}
	1 & \text{ if } s < t, \\
	0.5 & \text{ if } s = t, \\
	0 & \text{ if } s > t\quad. \\
	\end{cases}
\end{equation}
We obtain $\auc{Z}$ by normalizing $U$, that is, $\auc{Z} = \frac{1}{\abs{S_1}\abs{S_2}}U$.

AUC can be computed naively using $U$ statistic in $\bigO{n^2}$ time. However, we can easily
speed up the computation
to $\bigO{n \log n}$ time using Algorithm~\ref{alg:aucoffline}. 
To see the correctness, note that in Eq.~\ref{eq:mann} each $t \in S_2$ contributes to $U$
with 
\[
	\sum_{s \in S_1} f(s, t) = \abs{\set{s \in S_1 \mid s < t}} + \frac{1}{2}\abs{\set{s \in S_1 \mid s = t}}\quad.
\]
Algorithm~\ref{alg:aucoffline} achieves its running time by maintaining the first term (in a variable $h$) as it loops over sorted scores.
Note that if $Z$ is already sorted, then the running time reduces to linear.

\begin{algorithm}
\caption{Algorithm for computing $\auc{Z}$}
\label{alg:aucoffline}
$S \define $ unique scores of $Z$, sorted\;
$(n_1, n_2) \define $ label counts\;
$U \define 0$; $h \define 0$\;
\ForEach {$s \in S$} {
	$(w_1, w_2) \define \sum_{\score{z} = s} \weight{z}$\;
	$U \define U + w_2(h + w_1/2)$\;
	$h \define h + w_1$\;
}
\Return $U / (n_1n_2)$\;
\end{algorithm}

Our first goal is to show that we can maintain AUC in $\bigO{\log n}$ time
under addition or removal of data points.

Our second contribution is a procedure for maintaining $H$-measure.
	
$H$-measure is an alternative method proposed by~\citet{hand:09:alternative}. The main
idea is as follows: consider minimizing weighted loss,
\[
\begin{split}
	Q(c, \sigma) & = c p(\score{z} > \sigma, \lab{z} = 1) + (1 - c) p(\score{z} \leq \sigma, \lab{z} = 2) \\
	             & = c \pi_1 p(\score{z} > \sigma \mid \lab{z} = 1) + (1 - c) \pi_2 p(\score{z} \leq \sigma \mid \lab{z} = 2), \\
\end{split}
\]
where $c$ is a cost ratio, $\sigma$ is a threshold, $z$ is a random data point, and $\pi_k = p(\lab{z} = k)$ are class priors.
Let us write $\sigma(c)$ to be the 
threshold minimizing $Q(c, \sigma)$ for a given $c$. Increasing $c$
will decrease $\sigma(c)$, or in other words by varying $c$ we will
vary the threshold. As pointed out by~\citet{flach2011coherent} the curve $Q(c, \sigma(c))$ is
a variant of a \emph{cost curve} (see~\citep{drummond2006cost}),
\[
	c p(\score{z} > \sigma \mid \lab{z} = 1) + (1 - c) p(\score{z} \leq \sigma \mid \lab{z} = 2)\quad. 
\]
Here the difference
is that $Q(c, \sigma(c))$ uses class priors $\pi_k$ whereas the cost curve
omits them.

Since not all values of $c$ may be sensible, we assume that we are given a
weight function $u(c)$.
We are interested in measuring the weighted minimum loss as we vary $c$,
\begin{equation}
\label{eq:lintegral}
	L = \int Q(c, \sigma(c)) u(c) dc\quad.
\end{equation}
Here small values of $L$ indicate strong signal between the labels and the score. 

The $H$-measure is a normalized version of $L$,
\[
	H = 1 - L / L_{\textit{max}}\quad.
\]
Here, $L_{\textit{max}}$ is the largest possible value of $L$ over all possible ROC curves.
The negation is done so that the values of $H$ are consistent with the AUC scores:
values close to 1 represent good performance.

We will see that the convenient choice for $u$ will be a beta distribution, as
suggested by~\citet{hand:09:alternative}, since it allows us to express the
integrals in a closed form.

Computing the empirical $H$-measure in practice starts with an ROC curve $X$.
The following computations assume that the ROC curve is convex. If not, then
the first step is to compute the convex hull of $X$, which we will denote by $Y = \enpr{y_0}{y_m}$.
Taking a convex hull will inflate the performance of the underlying classifier, however it is
possible to modify the underlying classifier (see~\citep{hand:09:alternative} for more details) so that its ROC curve
is convex.

We then define
\begin{equation}
\label{eq:slope}
	c_i = \frac{\pi_2(y_{i2} - y_{(i - 1)2})}{\pi_2(y_{i2} - y_{(i - 1)2)}) + \pi_1(y_{i1} - y_{(i - 1)1})},
\end{equation}
where, recall that, $\pi_k = p(\lab{z} = k)$ are the class probabilities and $\enpr{y_0}{y_m}$ is the convex hull. The probabilities $\pi_k$ can be either estimated
from $Z$ or by some other means. If former, then we show that we can maintain the $H$-measure
exactly, if latter, then we need to estimate the measure in order to achieve a sublinear maintenance time.

We also set $c_0 = 0$ and $c_m = 1$. Note that $c_i$ is a monotonically decreasing function of the slope of the convex hull.
This guarantees that $c_i \leq c_{i + 1}$. We can show that (see~\citep{hand:09:alternative}) if $c_i < c  < c_{i + 1}$, then 
the minimum loss is equal to
\[
	Q(c, \sigma(c)) = c\pi_1(1 - y_{i1}) + (1 - c)\pi_2y_{i2} \quad.
\]

We can now write Eq.~\ref{eq:lintegral} as
\begin{equation}
\label{eq:lgeneric}
	L = \sum_{i = 0}^m \pi_1(1 - y_{i1}) \int_{c_i}^{c_{i + 1}} c u(c) dc + \pi_2y_{i2} \int_{c_i}^{c_{i + 1}} (1 - c)u(c)dc,
\end{equation}
and if we use beta distribution with parameters $(\alpha, \beta)$ as $u(c)$, we have
\begin{equation}
\label{eq:lbeta}
\begin{split}
	L = \frac{1}{B(1, \alpha, \beta)}\sum_{i = 0}^m & \pi_1(1 - y_{i1}) \pr{B(c_{i + 1}; \alpha + 1, \beta) - B(c_i; \alpha + 1, \beta)} \\
	 & + \pi_2y_{i2} \pr{B(c_{i + 1}; \alpha, \beta + 1) - B(c_i; \alpha, \beta + 1)},
\end{split}
\end{equation}
where $B(\cdot, \alpha, \beta)$ is an incomplete beta function.

Finally, we can show that the normalization constant is equal to
\[
	L_{\textit{max}} = \frac{\pi_1B(\pi_1; \alpha + 1, \beta) + \pi_2 B(1; \alpha, \beta + 1) - \pi_2 B(1; \alpha, \beta + 1)}{B(1, \alpha, \beta)}\quad.
\]

Given an ROC curve $X$, computing the convex hull $Y$, and subsequent steps, can be done in $\bigO{n}$ time.
We will show in Section~\ref{sec:hexact} that we can maintain the $H$-measure in $\bigO{\log^2 n}$ time if $\pi_k$ are estimated from $Z$. Otherwise
we will show in Section~\ref{sec:happrox} that we can approximate the $H$-measure in $\bigO{(\epsilon^{-1} + \log n)\log n}$ time.

As pointed earlier, $Q(c, \sigma(c))$ can be viewed as a variant of a cost
curve. If we were to replace $Q$ with the cost curve and use uniform
distribution for $u$, then, as pointed by~\citet{flach2011coherent}, $L$ is equivalent to the area
under the cost curve. Interestingly enough, we cannot use the algorithm given
in Section~\ref{sec:hexact} to compute the area of under the cost curve as the
precense of the priors is needed to decompose the measure. However, we can use the algorithm in Section~\ref{sec:happrox} to estimate
the area under the cost curve.

Interestingly enough, $Q(c, \sigma)$ can be linked to AUC. If, instead of using
the optimal threshold $\sigma(c)$, we average $Q$ over carefully selected
distribution for $\sigma$ and also use uniform distribution for $c$, then the resulting integral
is a linear transformation of AUC~\citep{flach2011coherent}.

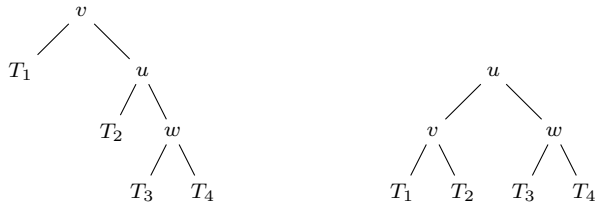
\begin{figure}
\begin{center}
\begin{tikzpicture}[level distance=0.8cm,
  level 1/.style={sibling distance=1.6cm},
  level 2/.style={sibling distance=0.8cm}]
  \node {$v$}
    child {node {$T_1$}}
    child {node {$u$}
      child {node {$T_2$}}
      child {node {$w$}
        child {node {$T_3$}}
        child {node {$T_4$}}
      }
	};
\end{tikzpicture}\hspace{2cm}
\begin{tikzpicture}[level distance=0.8cm,
  level 1/.style={sibling distance=1.6cm},
  level 2/.style={sibling distance=0.8cm}]
  \node {$u$}
    child {node {$v$}
      child {node {$T_1$}}
      child {node {$T_2$}}
    }
    child {node {$w$}
    child {node {$T_3$}}
      child {node {$T_4$}}
    };
\end{tikzpicture}
\end{center}

\caption{An example of left rotation in a search tree. Left figure: before
rotation, right figure: after rotation. Note that only $u$ and $v$ have
different children after the rotation.}
\label{fig:rotation}

\end{figure}

\textbf{Self-balancing search trees}
In this paper we make a significant use of self-balancing search trees such as
AVL-trees of red-black trees.  Such trees are binary trees where each node, say
$u$, has a key, say $k$. The left subtree of $u$ contains nodes with keys
smaller than $k$ and the right subtree of $u$ contains nodes with keys larger
than $k$. Maintaining this invariant allows for efficient queries as long as
the height of the tree is kept in check. Self-balancing trees such as AVL-trees
or red-black trees keep the height of the tree in $\bigO{\log n}$.
The balancing is done with $\bigO{\log n}$ number of left rotations or right rotations whenever the tree is modified (see Figure~\ref{fig:rotation}).
Searching for nodes with specific keys, inserting new nodes, and deleting existing
nodes can be done in $\bigO{\log n}$ time. Moreover, splitting the search
tree into two search tree or combining two trees into one can also be done in $\bigO{\log n}$ time.

We assume that we can compare and manipulate integers of
size $\bigO{n}$ and real numbers in constant time.  We do this because it is
reasonable to assume that the current bit-length of integers in modern computer
acrhitecture is sufficient for any practical applications, and we do need to
resort to any custom big integer implementations. If needed, however, the running
times need to be multiplied by an additional $\bigO{\log n}$ factor.  

\section{Related work}
Several works have studied maintaining AUC in a sliding window.
\citet{brzezinski:17:pauc} maintained the order of $n$ data points using a
red-black tree but computed AUC from scratch, resulting in a running time of
$\bigO{n + \log n}$, per update. \citet{tatti:18:auc} proposed algorithm yielding
$\epsilon$-approximation of AUC in $\bigO{(1 + \epsilon^{-1})\log n)}$ time, per update.
Here the approach bins the ROC space into a small number of bins. The bins are selected so that 
the AUC estimate is accurate enough.
\citet{bouckaert:06:auc} proposed estimating AUC by binning and only maintaining counters for
individual bins.
On the other hand, in this work we do not need to resort to binning,
instead we can maintain the exact AUC by maintaining a search search tree structure
in $\bigO{\log n}$ time, per update.

We should point out that AUC and the $H$-measure are defined over the whole ROC
curve, and are useful when we do not want to commit to a specific
classification threshold. On the other hand, if we do have the threshold, then
we can easily maintain a confusion matrix, and consequently maintain many
classic metrics, for example, accuracy, recall,
$F1$-measure~\cite{gama2013evaluating,gama2010knowledge}, and
Kappa-statistic~\citep{bifet2010sentiment,vzliobaite2015evaluation}.

In a related work, \citet{ataman2006learning,ferri2002learning,brefeld2005auc,herschtal2004optimising}
proposed methods where AUC is optimized as a part of training a classifier. Note that 
this setting differs from ours: changing the classifier parameters most likely will change
the scores of \emph{all} data points, and may change the data point order significantly. On the other hand,
we rely on the fact we can maintain the order using a search tree. Interestingly, \citet{calders:07:auc}
estimated AUC using a continuous function which then allowed optimizing the classifier parameters
with gradient descent.

Our approaches are useful if
we are working in a sliding window setting, that is, we want to compute
the relevant statistic using only the last $n$ data points. In other words, we 
abruptly forget the $(n + 1)$th data point. An alternative option would be to gradually
downplay the importance of older data points. A convenient option is to use exponential decay, see for example a survey by~\citet{gama2014survey}.
While maintaining the confusion matrix is trivial when using exponential decay
but---to our knowledge---there are no methods for maintaining AUC or $H$-measure under exponential decay. 

\section{Maintaining AUC}\label{sec:auc}

In this section we present a simple approach to maintain AUC in $\bigO{\log n}$ time.
We accomplish this by showing that the \emph{change} in AUC can be computed in $\bigO{\log n}$ time whenever
a new point is added or an existing point is deleted.
We rely on the following two propositions that express how AUC changes when adding or deleting a data point.
We then show that the quantities occurring in the propositions, namely, the weights $(u_1, u_2)$ and
$(v_1, v_2)$ can be obtained in $\bigO{\log n}$ time.

\begin{proposition}[Addition]
\label{prop:aucadd}
Let $Z$ be a set of data points with $(n_1, n_2)$ label counts.
Let $Y$ be a set of points having the same score $\sigma$.
Write $(w_1, w_2) = \sum_{y \in Y} \weight{y}$.
Define also 
\[
	(u_1, u_2) = \sum_{z \in Z \atop \score{z} < \sigma} \weight{z} \quad\text{and}\quad
	(v_1, v_2) = \sum_{z \in Z \atop \score{z} = \sigma} \weight{z} \quad .
\]
Write $U = n_1n_2 \times \auc{Z}$  and $U' = (n_1 + w_1)(n_2 + w_2) \times \auc{Z \cup Y}$.
Then
\[
	U' = U + w_2\pr{u_1 + \frac{v_1}{2}} + w_1\pr{n_2 - u_2 - \frac{v_2}{2}}  + \frac{w_1w_2}{2}\quad.
\]
\end{proposition}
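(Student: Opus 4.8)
The plan is to work directly from the Mann-Whitney formulation in Eq.~\ref{eq:mann}, since there $U$ is literally a double sum of $f$ over the label-1 and label-2 scores, which makes the effect of inserting a block of tied points transparent. Write $S_1, S_2$ for the label-1 and label-2 scores of $Z$. Adding $Y$ amounts to appending $w_1$ copies of $\sigma$ to $S_1$ and $w_2$ copies of $\sigma$ to $S_2$; call these two new blocks $A$ and $B$, so that the scores of $Z \cup Y$ are $S_1 \cup A$ and $S_2 \cup B$. Then $U' = \sum_{s \in S_1 \cup A}\sum_{t \in S_2 \cup B} f(s,t)$, and the first step is to split this into four double sums according to whether each score comes from the old set or from the new block: $S_1 \times S_2$, $S_1 \times B$, $A \times S_2$, and $A \times B$.

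Each block is then evaluated using the definition of $f$. The block $S_1 \times S_2$ is exactly $U$ and supplies the leading term. The block $A \times B$ consists of $w_1 w_2$ pairs, all with $s = t = \sigma$, so $f \equiv \tfrac12$ and it contributes $\tfrac{w_1 w_2}{2}$, matching the final term. For the cross block $S_1 \times B$ every $t$ equals $\sigma$, so for a fixed such $t$ the inner sum $\sum_{s \in S_1} f(s,\sigma)$ counts label-1 scores strictly below $\sigma$ with weight $1$ and those equal to $\sigma$ with weight $\tfrac12$; by definition these counts are $u_1$ and $v_1$, giving $u_1 + \tfrac{v_1}{2}$ per element of $B$ and hence $w_2\pr{u_1 + \tfrac{v_1}{2}}$ in total. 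Symmetrically, for the block $A \times S_2$ every $s$ equals $\sigma$, and $\sum_{t \in S_2} f(\sigma,t)$ counts label-2 scores strictly \emph{above} $\sigma$ with weight $1$ and those equal to $\sigma$ with weight $\tfrac12$. Summing the four contributions then reproduces the claimed identity.

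The one place that requires care — and the only spot where a mishandled tie would break the formula — is the bookkeeping for scores equal to $\sigma$: these points must be split consistently between the strict-inequality count and the half-weight tie count, and the number of label-2 scores lying above $\sigma$ must be obtained as $n_2 - u_2 - v_2$ (subtracting both the below-count $u_2$ and the tie-count $v_2$ from $n_2$), so that the $A \times S_2$ block evaluates to $w_1\pr{n_2 - u_2 - \tfrac{v_2}{2}}$ rather than to something with only $u_2$ subtracted. Everything else is a routine rearrangement of finite sums, with no dependence on the internal structure of $S_1$ or $S_2$ beyond the counts $u_1, u_2, v_1, v_2$ and the total $n_2$.
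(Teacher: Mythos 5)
Your proof is correct and takes essentially the same route as the paper's: both expand the Mann--Whitney statistic for $Z \cup Y$ and split the double sum according to whether each score comes from the old set or from the new tied block at $\sigma$, handling ties with weight $\tfrac{1}{2}$. The only cosmetic difference is that you use a four-block decomposition with the new-vs-new pairs $A \times B$ isolated (yielding $\tfrac{w_1 w_2}{2}$ directly), whereas the paper folds those pairs into the term $w_1 \sum_{t \in S_2'} f(\sigma, t)$, where they surface as the $\tfrac{v_2 + w_2}{2}$ contribution before rearrangement.
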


\begin{proof}
We will use Mann-Whitney U statistic, given in Eq.~\ref{eq:mann} to prove the claim.
Let us write $Z' = Z \cup Y$ and define
\[
	S_i = \set{s \mid (s, \ell) \in Z, \ell = i} \quad\text{and}\quad S_i' = \set{s \mid (s, \ell) \in Z', \ell = i},
	\quad\text{for}\quad i = 1, 2\quad.
\]
Eq.~\ref{eq:mann} states that
\[
\begin{split}
	U' & = \sum_{s \in S_1'} \sum_{t \in S_2'} f(s, t) \\
	   & = w_1 \sum_{t \in S_2'} f(\sigma, t) + w_2 \sum_{s \in S_1} f(s, \sigma) + \sum_{s \in S_1} \sum_{t \in S_2} f(s, t)  \\
	   & = w_1 \sum_{t \in S_2'} f(\sigma, t) + w_2 \sum_{s \in S_1} f(s, \sigma) + U  \\
	   & = w_1 \pr{n_2 - u_2 - v_2 + \frac{v_2 + w_2}{2}} + w_2 \pr{u_1 + \frac{v_1}{2}} + U\quad.  \\
\end{split}
\]
We obtain the claim by rearranging the terms.\qed
\end{proof}

\begin{proposition}[Deletion]
\label{prop:aucremove}
Let $Z$ be a set of data points with $(n_1, n_2)$ label counts.
Let $Y \subseteq Z$ be a set of points having the same score $\sigma$.
Write $(w_1, w_2) = \sum_{y \in Y} \weight{y}$.
Define also 
\[
	(u_1, u_2) = \sum_{z \in Z \atop \score{z} < \sigma} \weight{z} \quad\text{and}\quad
	(v_1, v_2) = \sum_{z \in Z \atop \score{z} = \sigma} \weight{z} \quad .
\]
Write $U = n_1n_2 \times \auc{Z}$  and $U' = (n_1 - w_1)(n_2 - w_2) \times \auc{Z \setminus Y}$.
Then
\[
	U' = U - w_2\pr{u_1 + \frac{v_1}{2}} - w_1\pr{n_2 - u_2 - \frac{v_2}{2}}  + \frac{w_1w_2}{2}\quad.
\]
\end{proposition}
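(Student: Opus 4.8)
The plan is to obtain this identity as a direct corollary of the Addition proposition rather than repeating the Mann--Whitney computation, exploiting the fact that deleting $Y$ is exactly the inverse of adding it. (A self-contained proof via Eq.~\ref{eq:mann}, mirroring the one for Proposition~\ref{prop:aucadd}, is also available, but the reduction is shorter.)

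First I would set $Z' = Z \setminus Y$, so that $Z = Z' \cup Y$ and the label counts of $Z'$ are $(n_1 - w_1, n_2 - w_2)$. The idea is then to invoke Proposition~\ref{prop:aucadd} with base set $Z'$ and added set $Y$. Under this identification the proposition's ``before'' statistic is $(n_1 - w_1)(n_2 - w_2)\auc{Z'} = U'$, and its ``after'' statistic is $n_1 n_2 \auc{Z} = U$, so a single application of the Addition formula relates $U$ and $U'$.

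The only step requiring genuine care---and the main obstacle---is that the quantities $(u_1, u_2)$ and $(v_1, v_2)$ in Proposition~\ref{prop:aucadd} are computed relative to the \emph{base} set, which is now $Z'$, whereas the quantities in the present statement are computed relative to $Z$. Here I would observe that since every point of $Y$ has score exactly $\sigma$, deleting $Y$ leaves untouched every point with score strictly below $\sigma$; hence the strictly-smaller counts agree, $\sum_{z \in Z',\, \score{z} < \sigma} \weight{z} = (u_1, u_2)$. The equal-to-$\sigma$ counts, on the other hand, lose precisely $Y$, so $\sum_{z \in Z',\, \score{z} = \sigma} \weight{z} = (v_1 - w_1, v_2 - w_2)$. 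I would also replace the proposition's base count $n_2$ by $n_2 - w_2$.

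Substituting these translated quantities into the Addition formula yields an expression for $U$ in terms of $U'$; solving for $U'$ and collecting terms then gives the claimed identity. This final step is routine algebra, but worth checking carefully: the three shifts---in $v_1$, in $v_2$, and in $n_2$---together with the explicit $\tfrac{w_1 w_2}{2}$ already present in the Addition formula each produce a multiple of $w_1 w_2$, and these must combine so that exactly $+\tfrac{w_1 w_2}{2}$ remains once $U'$ is isolated. This coefficient is the easiest place to slip, so I would verify it explicitly before concluding.
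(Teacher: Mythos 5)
Your proposal is correct, and it takes a genuinely different route from the paper. The paper proves the Deletion proposition from scratch: it repeats the Mann--Whitney computation of Eq.~\ref{eq:mann}, decomposing $U$ (for the full set $Z$) into the pairs involving the removed points plus $U'$, i.e.\ $U = w_1\pr{n_2 - u_2 - v_2/2} + w_2\pr{u_1 + (v_1 - w_1)/2} + U'$, and then rearranges. You instead obtain the identity as a corollary of Proposition~\ref{prop:aucadd} applied with base set $Z' = Z \setminus Y$ and added set $Y$, and you correctly isolate the one non-trivial point: the quantities in the Addition formula are relative to the base set, so the strictly-below counts are unchanged, the at-$\sigma$ counts become $(v_1 - w_1, v_2 - w_2)$, and the class-2 count becomes $n_2 - w_2$. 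Carrying out the substitution, the shifts contribute $-\tfrac{w_1w_2}{2} + \tfrac{w_1w_2}{2} - w_1w_2$, which together with the explicit $+\tfrac{w_1w_2}{2}$ gives $U = U' + w_2\pr{u_1 + \tfrac{v_1}{2}} + w_1\pr{n_2 - u_2 - \tfrac{v_2}{2}} - \tfrac{w_1w_2}{2}$, and solving for $U'$ yields exactly the claimed formula, with the $+\tfrac{w_1w_2}{2}$ sign intact. What each approach buys: the paper's direct proof is self-contained and keeps the two propositions symmetric, with nearly identical arguments; your reduction is shorter, eliminates the duplicated computation, and makes structurally transparent why deletion is the exact inverse of addition --- in particular it explains, rather than merely observes, the paper's remark that the sign of the last term is the same in both propositions.
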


Note that the sign of the last term is the same for both addition and deletion.

\begin{proof}
We will use Mann-Whitney U statistic, given in Eq.~\ref{eq:mann} to prove the claim.
Let us write $Z' = Z \setminus Y$ and define
\[
	S_i = \set{s \mid (s, \ell) \in Z, \ell = i} \quad\text{and}\quad S_i' = \set{s \mid (s, \ell) \in Z', \ell = i},
	\quad\text{for}\quad i = 1, 2\quad.
\]
Eq.~\ref{eq:mann} states that
\[
\begin{split}
	U & = \sum_{s \in S_1} \sum_{t \in S_2} f(s, t) \\
	   & = w_1 \sum_{t \in S_2} f(\sigma, t) + w_2 \sum_{s \in S_1'} f(s, \sigma) + \sum_{s \in S_1'} \sum_{t \in S_2'} f(s, t)  \\
	   & = w_1 \sum_{t \in S_2} f(\sigma, t) + w_2 \sum_{s \in S_1'} f(s, \sigma) + U' \\
	   & = w_1 \pr{n_2 - u_2 - v_2 + \frac{v_2}{2}} + w_2 \pr{u_1 + \frac{v_1 - w_1}{2}} + U'\quad.  \\
\end{split}
\]
We obtain the claim by rearranging the terms.\qed
\end{proof}
Note that normally we would be adding or deleting a single data point, that is, $Y = \set{y}$. However,
the propositions also allow us to modify multiple points with the same score.

These two propositions allow us to maintain AUC as long as we can compute $(u_1, u_2)$ and $(v_1, v_2)$.
To compute these quantities we will use a balanced search tree $T$ such as red-black tree or AVL tree.
Let $S$ be the unique scores of $Z$. Each score $s \in S$ is given a node $n \in T$.

Moreover, for each node $x$ with a score of $s$, we will store the total label counts having the same score,
$\weight{x} = \sum_{\score{z} = s} \weight{z}$. The counts $\weight{x}$ will give us immediately $(v_1, v_2)$.

In addition, we will store $\cweight{x}$, cumulative label counts of all descendants of $x$, including $x$ itself.
We need to maintain these counts whenever we add or remove nodes from $T$, change the counts of nodes, or when $T$
needs to be rebalanced. Luckily, since 
\[
	\cweight{x} = \cweight{\lc{x}} + \cweight{\rc{x}} + \weight{x}
\]
we can compute $\cweight{x}$ in constant time as long as we have the cumulative counts of children of $x$.
Whenever node $x$ is changed, only its ancestors are changed, so the cumulative weights can be updated
in $\bigO{\log n}$ time. The balancing in red-black tree or AVL tree is done by using left or right rotation.
Only two nodes are changed per rotation (see Figure~\ref{fig:rotation}), and we can recompute the cumulative counts for these nodes in constant time.
There are at most $\bigO{\log n}$ rotations, so the running time is not increased.

Given a tree $T$ and a score threshold $\sigma$, let us define $\lcount{\sigma, T} = \sum_{\score{x} < \sigma} \weight{x}$, to be the total
count of nodes with scores smaller than $\sigma$. Computing $\lcount{s, T}$ gives us $(u_1, u_2)$ used by Propositions~\ref{prop:aucadd}--\ref{prop:aucremove}.

In order to compute $\lcount{\sigma, T}$ we will use the procedure given in Algorithm~\ref{alg:leftweight}. 
Here, we use a binary search over the tree, and summing the cumulative counts
of the left branch.  To see the correctness of the algorithm, observe that
during the while-loop Algorithm~\ref{alg:leftweight} maintains the invariant that
$u + \cweight{\lc{x}}$ is equal to $\lcount{\score{x}, T}$.
We should point out that similar queries were considered by~\citet{tatti:18:auc}. However, they were not combined
with Propositions~\ref{prop:aucadd}--\ref{prop:aucremove}.

\begin{algorithm}
\caption{Computes $\lcount{\sigma, T}$ using a binary search tree}
\label{alg:leftweight}

$x \define $ root of $T$\;
$u \define (0, 0)$\;

\While {$x$ \Or $\score{x} \neq \sigma$} {
	\uIf {$\score{x} > \sigma$} {
		$x \define \lc{n}$\;
	}
	\Else {
		$u \define u + \cweight{\lc{x}} + \weight{x}$\;
		$x \define \rc{x}$\;
	}
}
\Return $u + \cweight{\lc{x}}$\;
\end{algorithm}

Since $T$ is balanced, the running time of Algorithm~\ref{alg:leftweight} is $\bigO{\log n}$.

In summary, we can maintain $T$ in $\bigO{\log n}$ time, and we can obtain
$(u_1, u_2)$ and $(v_1, v_2)$ using $T$ in $\bigO{\log n}$ time.  These
quantities allow us to maintain AUC in $\bigO{\log n}$ time.

\section{Maintaining H-measure}
\label{sec:hexact}

If we were to compute the $H$-measure from scratch, we first need to compute the convex
hull, and then compute the $H$-measure from the convex hull. In order to maintain the
$H$-measure, we will first address
maintaining the convex hull, and then explain how we maintain the actual measure.

\subsection{Divide-and-conquer approach for maintaining a convex hull}

Maintaining a convex hull under point additions or deletions is a well-studied
topic in computational geometry. A classic approach by~\citet{overmars1981maintenance} maintains
the hull in $\bigO{\log^2 n}$ time. Luckily, the same approach with some modifications
will work for us.

Before we continue, we should stress two important differences between our
setting and a traditional setting of maintaining a convex hull.

First, in a normal setting, the additions and removals are done to new \emph{points in a
plane}.  In other words, the remaining points do not change over time.  In our
case, the data point consists of a classifier score and a label, and modifications shift
the ROC coordinates of every point. As a concrete example, in a traditional setting,
adding a point cannot reveal already existing points whereas adding a new data
point can shift the ROC curve enough so that some existing points become included
in the convex hull.

Secondly, we do not have the coordinates for all the points. However, it turns out
that we can compute the \emph{needed} coordinates with no additional costs.

We should point out that the approach by~\citet{overmars1981maintenance} is not the fastest for
maintaining the hull: for example an algorithm by~\citet{brodal2002dynamic} can maintain the hull in
$\bigO{\log n}$ time. However, due to the aforementioned differences adapting this
algorithm to our setting is non-trivial, and possibly infeasible.

We will explain next the main idea behind the algorithm by~\citet{overmars1981maintenance},
and then modify it to our needs.

The overall idea behind the algorithm is as follows.
A generic convex hull can be viewed as a union of the lower convex hull and the upper convex hull.
We only need to compute the upper convex hull, and for simplicity, we will refer to the upper convex hull
as the convex hull.

\begin{figure}

\begin{center}
\subcaptionbox{Combining partial hulls\label{fig:bridge}}{
\begin{tikzpicture}

\draw[yafcolor5, thick] (0, 0) node[fill, circle, inner sep = 0.5mm, label={[circle,black]0:$h_1$}] {}
                 -- ++(0.2, 1) node[fill, circle, inner sep = 0.5mm, label={[circle,black]0:$h_2$}] {}
				 -- ++(0.5, 0.6) node[fill, circle, inner sep = 0.5mm, label={[circle,black, inner sep = 0pt]-45:$h_3$}] (b1) {}
				 -- ++(0.7, 0.3) node[fill, circle, inner sep = 0.5mm, label={[circle,black, inner sep = 0pt]-90:$h_4$}] {}
				 -- ++(1, 0) node[fill, circle, inner sep = 0.5mm, label={[circle, black, inner sep = 0pt]-90:$h_5$}] {};

\draw[yafcolor5, thick] (3, 2) node[fill, circle, inner sep = 0.5mm, label={[circle,black]0:$g_1$}] {}
                 -- ++(0.2, 1) node[fill, circle, inner sep = 0.5mm, label={[circle,black]0:$g_2$}] {}
				 -- ++(0.5, 0.6) node[fill, circle, inner sep = 0.5mm, label={[circle,black, inner sep = 0pt]-45:$g_3$}] (b2) {}
				 -- ++(0.7, 0.3) node[fill, circle, inner sep = 0.5mm, label={[circle,black, inner sep = 0pt]-90:$g_4$}] {}
				 -- ++(1, 0) node[fill, circle, inner sep = 0.5mm, label={[circle, black, inner sep = 0pt]-90:$g_5$}] {};

\draw[yafcolor2, thick] (b1) -- (b2) node[pos=0.5, sloped, black, auto, inner sep = 0pt] {bridge};

\end{tikzpicture}}
\subcaptionbox{Search tree for hulls\label{fig:hulltree}}[4cm]{
\begin{tikzpicture}[scale=0.3]

\begin{scope}[]
\draw[yafcolor5, thick] (0, 0) node[fill, circle, inner sep = 0.3mm] (n1) {}
                 ++(0.2, 1) node[fill, circle, inner sep = 0.3mm] (n2) {}
				 ++(0.5, 0.6) node[fill, circle, inner sep = 0.3mm] (n3) {}
				 ++(0.7, 0.3) node[fill, circle, inner sep = 0.3mm] (n4) {}
				 ++(0.8, 0.2) node[fill, circle, inner sep = 0.3mm] (n5) {}
				 ++(1, 0) node[fill, circle, inner sep = 0.3mm] (n6) {};
\draw[thick] (n1) edge[yafcolor1] (n2);
\draw[thick] (n2) edge[yafcolor2] (n3);
\draw[thick] (n3) edge[yafcolor3] (n4);
\draw[thick] (n4) edge[yafcolor4] (n5);
\draw[thick] (n5) edge[yafcolor5] (n6);
\end{scope}

\begin{scope}[shift = {(-1, -4.7)}]
\draw[yafcolor5, thick] (0, 0) node[fill, circle, inner sep = 0.3mm] (n1) {}
                 ++(0.2, 1) node[fill, circle, inner sep = 0.3mm] (n2) {}
				 ++(0.5, 0.6) node[fill, circle, inner sep = 0.3mm] (n3) {}
				 ++(0.7, 0.3) node[fill, circle, inner sep = 0.3mm] (n4) {};
\draw[thick] (n1) edge[yafcolor1] (n2);
\draw[thick] (n2) edge[yafcolor2] (n3);
\draw[thick] (n3) edge[yafcolor3] (n4);

\begin{scope}[shift = {(-1.5, -4)}]
\draw[yafcolor5, thick] (0, 0) node[fill, circle, inner sep = 0.3mm] (n1) {}
                 ++(0.2, 1) node[fill, circle, inner sep = 0.3mm] (n2) {};
\draw[thick] (n1) edge[yafcolor1] (n2);
\end{scope}

\begin{scope}[shift = {(2, -4.5)}]
\draw[yafcolor5, thick] (0, 0) 
                 ++(0.2, 1) 
				 ++(0.5, 0.6) node[fill, circle, inner sep = 0.3mm] (n3) {}
				 ++(0.7, 0.3) node[fill, circle, inner sep = 0.3mm] (n4) {};
\draw[thick] (n3) edge[yafcolor3] (n4);
\end{scope}

\draw[thick] (1, 0) edge[in=90, out=-90] (-1, -2.5);
\draw[thick] (1, 0) edge[in=90, out=-90] (3, -2.5);

\end{scope}

\begin{scope}[shift = {(1, -5)}]
\draw[yafcolor5, thick] (0, 0) 
                 ++(0.2, 1) 
				 ++(0.5, 0.6) 
				 ++(0.7, 0.3) 
				 ++(0.8, 0.2) node[fill, circle, inner sep = 0.3mm] (n5) {}
				 ++(1, 0) node[fill, circle, inner sep = 0.3mm] (n6) {};
\draw[thick] (n5) edge[yafcolor5] (n6);
\end{scope}

\draw[thick] (2, 0) edge[in=90, out=-90] (0, -2.5);
\draw[thick] (2, 0) edge[in=90, out=-90] (4, -2.5);

\end{tikzpicture}}
\end{center}

\caption{Left figure: an example of combining two partial convex hulls into one by finding a bridge
segment. Right figure: a stylized data structure for maintaing convex hull. Each node corresponds to a partial convex hull (that are stored in separate search trees),
a parent hull is obtained from the child hulls by finding the bridge segment. Leaf nodes containing individual data points are not shown.}

\end{figure}
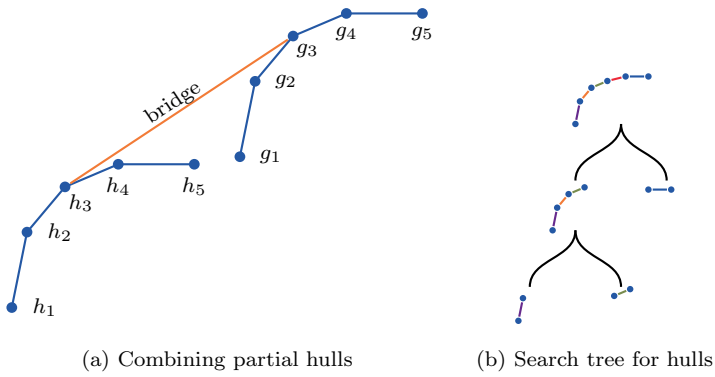

In order to compute the convex hull $C$ for a point set $P$ we can use a
conquer-and-divide technique. Assume that we have ordered the points using
the
$x$-coordinate, and split the points roughly in half, say in sets $R$ and $Q$.
Then assume we have computed convex hulls, say $H = \set{h_{i}}$ and $G =
\set{g_{i}}$, for $R$ and $Q$ independently.

A key result by~\citet{overmars1981maintenance} states that the convex hull $C$ of $P$
is equal to $\set{h_{1}, \ldots, h_{u}, g_{v}, g_{v + 1}, \ldots}$,
that is, $C$ starts with $H$ and ends with $G$.
See Figure~\ref{fig:bridge} for illustration.
The segment between $h_{u}$ and $g_{v}$ is often referred as a \emph{bridge}.

We can find the indices $u$ and $v$ in $\bigO{\log n}$ time using a binary
search over $H$ and $G$. In order to perform the binary search we will store the
hulls $H$ and $G$ in balanced search trees (red-black tree or AVL tree). Then the binary
search amounts to traversing these trees. 

Note that the concatenation and splitting of a search tree can be done in $\bigO{\log n}$ time.
In other words, we can obtain $C$ for partial convex hulls $H$ and $G$ in $\bigO{\log n}$ time.

In order to maintain the hull we will store the original points in a balanced
search tree $T$;\footnote{This is a different tree than the trees used for storing convex hulls.}
only the leaves store the actual points. Each node in $u \in T$ represents a set
of points stored in the descendant leaves of $u$.
See Figure~\ref{fig:hulltree} for illustration.

Let us write $H(u)$ to be the convex
hull of these points: we can obtain $H(u)$ from $H(\lc{u})$ and $H(\rc{u})$
in $\bigO{\log n}$ time. So whenever we modify $T$ by adding or removing a leaf $v$,
we only need to update the ancestors of $v$, and possibly some additional nodes
due to the rebalancing. All in all, we only need to update $\bigO{\log n}$ nodes, which
brings the running time to $\bigO{\log^2 n}$.

An additional complication is that whenever we compute $H(u)$ we also destroy
$H(\lc{u})$ and $H(\rc{u})$ in the process, trees that we may need in the future. However, we can rectify this by
storing the remains of the partial hulls, and then reversing the join if we
were to modify a leaf of $u$. This reversal can be done in $\bigO{\log n}$
time.

\subsection{Maintaining the convex hull of a ROC curve}

Our next step is to adapt the existing algorithm to our setting so that
we can maintain the hull of an ROC curve $X$.

First of all, adding or removing data points shifts the remaining points.
To partially rectify this issue, we will use non-normalized coordinates $R = \enpr{r_0}{r_m}$
given in Eq.~\ref{eq:rocraw}. We can do this because scaling does not
change the convex hull.

Consider adding or removing a data point $z$ which is represented by a leaf $u \in T$.
The points in $R$ associated with smaller scores than $\score{z}$ will not shift,
and the points in $R$ associated with larger scores than $\score{z}$ will shift by the same amount.
Consequently, the only partial hulls that are affected are the ancestors of $u$.
This allows us to use the update algorithm of \citet{overmars1981maintenance} for our setting as long as we can obtain the
coordinates of the points.

Our second issue is that we do not have access to the coordinates $r_i$.
We approach the problem with the same strategy as when we were computing AUC.

Let $U$ be the search tree of a convex hull $H$. Let $u \in U$ be a node
with coordinates $r_{i}$. We will define and store $\weight{u}$ as the coordinate difference $r_i - r_{i - 1}$.
Let $s_i$ be the score corresponding to $r_i$. Then Eq.~\ref{eq:rocraw} implies that $\weight{u} = \sum_{s_{i - 1} < \score{z} \leq s_i} \weight{z}$.

In addition, we will store $\cweight{u}$, the total sum of the coordinate differences
of descendants of $u$, including $u$ itself.

Let $u$ be the root of $U$.
The coordinates, say $p$, of $u$ in $U$ are $\cweight{\lc{u}} + \weight{u}$.
Moreover, the coordinates of the left
child of $u$ are
\[
	p - \weight{u} - \cweight{\rc{\lc{u}}},
\]
and the coordinates of the right child of $u$ are
\[
	p + \weight{\rc{u}} + \cweight{\lc{\rc{u}}}\quad.
\]
In other words, we can compute the coordinates of children in $U$ in constant time if
we know the coordinates of a parent.

When combining two hulls,
the binary search needed to find the bridge is based on descending $U$ from
root to the correct node. During the binary search the algorithm needs to know
the coordinates of a node which we can now obtain from the coordinates of
the parent. In summary, we can do the binary search in $\bigO{\log n}$ time,
which allow us to maintain the hull of a ROC curve in $\bigO{\log^2 n}$ time.

For completeness we present the pseudo-code for the binary search in Appendix.

\subsection{Maintaining $H$-measure}

Now that we have means to maintain the convex hull, our next step is to maintain the $H$-measure.
Note that the only non-trivial part is $L$ given in Eq.~\ref{eq:lgeneric}.

Assume that we have $n$ data points $Z$ with $n_k$ data points having class $k$.
Let $Y = \enpr{y_0}{y_m}$ be the convex hull of the ROC curve computed from $Z$.
Let $(d_1, \ldots, d_m)$ the non-normalized differences
between the neighboring points, that is,
\[
	d_{i1} = n_1(y_{i1} - y_{(i - 1)i}) \quad\text{and}\quad
	d_{i2} = n_2(y_{i2} - y_{(i - 1)2}) \quad.
\]

We will now assume that $\pi_k$ occurring in  Eq.~\ref{eq:lgeneric} are computed from the same data as the ROC curve,
that is, $\pi_k = n_k / n$.
We can rewrite the first term in Eq.~\ref{eq:lgeneric} as 
\[
\begin{split}
	\sum_{i = 0}^m \pi_1 (1 - y_{i1}) \int_{c_i}^{c_{i + 1}} cu(c)dc
	& = \frac{1}{n} \sum_{i = 0}^m  \sum_{j = i + 1}^m d_{j1}  \int_{c_i}^{c_{i + 1}} cu(c)dc \\
	& = \frac{1}{n} \sum_{j = 1}^m d_{j1}  \sum_{i = 0}^{j - 1} \int_{c_i}^{c_{i + 1}} cu(c)dc \\
	& = \frac{1}{n} \sum_{j = 1}^m d_{j1}  \int_{0}^{c_{j}} cu(c)dc\quad. \\
\end{split}
\]
Similarly, we can express the second term of Eq.~\ref{eq:lgeneric} as
\[
\begin{split}
	\sum_{i = 0}^m \pi_2 y_{i2} \int_{c_i}^{c_{i + 1}} (1 - c)u(c)dc
	& = \frac{1}{n} \sum_{i = 0}^m  \sum_{j = 1}^i d_{j2}  \int_{c_i}^{c_{i + 1}} (1 - c)u(c)dc \\
	& = \frac{1}{n} \sum_{j = 1}^m d_{j2}  \sum_{i = j}^m \int_{c_i}^{c_{i + 1}} (1 - c)u(c)dc \\
	& = \frac{1}{n} \sum_{j = 1}^m d_{j2}  \int_{c_j}^{1} (1 - c)u(c)dc\quad.\\
\end{split}
\]

If we use the beta distribution for $u$, Eq.~\ref{eq:lbeta} reduces to
\begin{equation}
\label{eq:lnode}
	L = \frac{1}{nB(1, \alpha, \beta)} \sum_{j = 1}^m d_{j1} B(c_j, \alpha + 1, \beta) + d_{j2} (B(1, \alpha, \beta + 1)  - B(c_j, \alpha, \beta + 1))\quad.
\end{equation}

Let us now consider values $c_j$. Because we assume that $\pi_k$ are estimated from the testing data,
we have $\pi_k = n_k / n$, so the values $c_j$, given in Eq.~\ref{eq:slope}, reduce to
\[
	c_j = \frac{\pi_2(y_{j2} - y_{(j - 1)2})}{\pi_2(y_{j2} - y_{(j - 1)2)}) + \pi_1(y_{j1} - y_{(j - 1)1})}
	=  \frac{\pi_2 d_{j2} / n_2}{\pi_1 d_{j1} / n_1 + \pi_2 d_{j2} / n_2}
	= \frac{d_{j2}}{d_{j1} + d_{j2}}\quad.
\]

In summary, the terms of the sum in Eq.~\ref{eq:lnode} depend \emph{only} on the coordinate differences $d_j$.
We should stress that this is only possible if we assume that $\pi_k$ are computed from the
same data as the ROC curve. Otherwise, the terms $n_k$ will not cancel out when computing $c_j$.

Let $T$ be a binary tree representing a convex hull.
The sole dependency on $d_j$ allows us to use $T$ to maintain the $H$-measure.
In order to do that, let $v \in T$ be a node with the coordinate difference $(d_1, d_2) = \weight{v}$.
Let $c = d_2/(d_1 + d_2)$. We define
\[
	\hm{v} = d_1 B(c, \alpha + 1, \beta) + d_{2} (B(1, \alpha, \beta + 1)  - B(c, \alpha, \beta + 1))\quad.
\]

We also maintain $\chm{v}$ to be the sum of $\hm{u}$ of all descendants $u$ of $v$, including $v$.
Note that maintaining $\chm{v}$ can be done in a similar fashion as $\cweight{v}$.

Finally, Eq.~\ref{eq:lnode} implies that $L = \frac{\chm{\troot{T}} } {n B(1, \alpha, \beta)}$, allowing us to maintain the $H$-measure
in $\bigO{\log^2 n}$ time.

\section{Approximating $H$-measure}
\label{sec:happrox}

In our final contribution we consider the case where $\pi_k$ are not computed from the same dataset
as the ROC curve. The consequence is that we no longer can simplify $c_j$ so that it only depends on $d_j$,
and we cannot express $L$ as a sum over the nodes of the tree representing the convex hull. 

We will approach the task differently. We will still maintain the convex hull $H$. We then select
a \emph{subset} of points from $H$ from which we compute the $H$-measure from scratch. This subset will be selected
carefully. On one hand, the subset will yield an $\epsilon$-approximation. On the other hand, the subset
will be small enough so that we still obtain polylogarithmic running time.

We start by rewriting Eq.~\ref{eq:lgeneric}.
Given a function $\funcdef{x}{[0, 1]}{\real^+}$, let us define
\[
	L_1(x) = \int_0^1 \pi_1 x(c) c u(c) dc, \quad\text{and}\quad
	L_2(x) = \int_0^1 \pi_2 x(c) (1 - c) u(c) dc\quad.
\]

Consider the values $\set{y_i}$ and $\set{c_i}$ as used in Eq.~\ref{eq:lgeneric}.
We define two functions $\funcdef{f, g}{[0, 1]}{\real^+}$
as
\begin{equation}
\label{eq:fapprox}
\begin{split}
	g(c) & = y_{i2}, \quad\text{where}\quad c_i \leq c < c_{i + 1}, \quad\text{and}\quad g(1) = 1, \\
	f(c) & = 1 - y_{i1}, \quad\text{where}\quad c_i \leq c < c_{i + 1}, \quad\text{and}\quad f(1) = 0\quad. \\
\end{split}
\end{equation}

We can now write Eq.~\ref{eq:lgeneric} as $L = L_1(f) + L_2(g)$.

We say that a function $x'$ is an $\epsilon$-approximation of a function $x$ if
$\abs{x(c) - x'(c)} \leq \epsilon x(c)$.
The following two propositions are immediate.

\begin{proposition}
Let $x'$ be an $\epsilon$-approximation of $x$, then 
\[
	\abs{L_1(x) - L_1(x')} \leq \epsilon L_1(x) \quad\text{and}\quad \abs{L_2(x) - L_2(x')} \leq \epsilon L_2(x)\quad.
\]
\end{proposition}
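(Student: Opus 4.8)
The plan is to prove both inequalities by the same elementary argument, exploiting that $L_1$ and $L_2$ are integrals of their argument against a pointwise nonnegative kernel. I would handle $L_1$ first. By linearity of the integral,
\[
	L_1(x) - L_1(x') = \int_0^1 \pi_1 \pr{x(c) - x'(c)} c\, u(c)\, dc\quad,
\]
so applying $\abs{\int f} \leq \int \abs{f}$ and then pulling the (nonnegative) kernel out of the absolute value gives
\[
	\abs{L_1(x) - L_1(x')} \leq \int_0^1 \pi_1 \abs{x(c) - x'(c)} c\, u(c)\, dc\quad.
\]
Now I would substitute the defining pointwise bound $\abs{x(c) - x'(c)} \leq \epsilon x(c)$ of an $\epsilon$-approximation, which is legitimate precisely because the remaining factor $\pi_1 c\, u(c)$ is nonnegative, and then factor out the constant $\epsilon$:
\[
	\int_0^1 \pi_1 \abs{x(c) - x'(c)} c\, u(c)\, dc \leq \epsilon \int_0^1 \pi_1 x(c) c\, u(c)\, dc = \epsilon L_1(x)\quad.
\]

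The bound for $L_2$ is obtained verbatim, replacing the factor $c$ by $1 - c$; since $1 - c \geq 0$ on $[0, 1]$ the same manipulation applies unchanged and yields $\abs{L_2(x) - L_2(x')} \leq \epsilon L_2(x)$.

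The only thing that really needs checking — and this is why the statement is labelled immediate — is the collection of sign conditions that make the two displayed steps valid: the prior $\pi_1 \geq 0$, the factors $c$ and $1 - c$ being nonnegative on the integration domain $[0, 1]$, the weight $u$ being nonnegative (it is a density, e.g. the beta density used in Eq.~\ref{eq:lbeta}), and $x$ taking values in $\real^+$ as specified by $\funcdef{x}{[0, 1]}{\real^+}$, so that $\epsilon x(c) \pi_1 c\, u(c)$ reassembles exactly into $\epsilon L_1(x)$ rather than merely bounding it. I do not anticipate any genuine obstacle here; once these nonnegativity facts are in place the two estimates are just an integrated triangle inequality, and no property of $f$, $g$, or the convex hull is needed.
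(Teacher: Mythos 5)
Your proof is correct and is exactly the argument the paper has in mind when it declares the proposition ``immediate'' (the paper itself omits the proof): linearity of the integral, the triangle inequality $\abs{\int f} \leq \int \abs{f}$, the pointwise bound $\abs{x(c) - x'(c)} \leq \epsilon x(c)$, and nonnegativity of the kernel $\pi_k\, c\, u(c)$ (resp.\ $\pi_k (1-c) u(c)$). Nothing is missing; your explicit accounting of the sign conditions is the whole content of the claim.
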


\begin{proposition}
Let $f$ and $g$ be defined as in Eq.~\ref{eq:fapprox},
and let $f'$ and $g'$ be respective $\epsilon$-approximations.
Define
\[
	H = 1 - \frac{L_1(f) + L_2(g)}{L_{max}} \quad\text{and}\quad
	H' = 1 - \frac{L_1(f') + L_2(g')}{L_{max}}\quad.
\]
Then $\abs{H - H'} \leq \epsilon (1 - H)$.
\end{proposition}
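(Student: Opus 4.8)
The plan is to reduce the claim to the preceding proposition by exploiting that $H$ and $H'$ are built from the same normalizing constant $L_{max}$. First I would observe that $1 - H = \pr{L_1(f) + L_2(g)}/L_{max}$ and $1 - H' = \pr{L_1(f') + L_2(g')}/L_{max}$, so that the constant $1$ and the denominator $L_{max}$ are identical in both expressions and cancel upon subtraction. This leaves
\[
	H - H' = \frac{\pr{L_1(f) - L_1(f')} + \pr{L_2(g) - L_2(g')}}{L_{max}}\quad.
\]
Taking absolute values and applying the triangle inequality to the numerator isolates the two approximation errors, one coming from $f$ versus $f'$ and the other from $g$ versus $g'$.

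Next I would bound each error term independently. Since $f'$ is an $\epsilon$-approximation of $f$ and $g'$ is an $\epsilon$-approximation of $g$, the previous proposition applies directly and gives $\abs{L_1(f) - L_1(f')} \leq \epsilon L_1(f)$ together with $\abs{L_2(g) - L_2(g')} \leq \epsilon L_2(g)$. Summing these two bounds and factoring out $\epsilon$ yields
\[
	\abs{H - H'} \leq \frac{\epsilon\pr{L_1(f) + L_2(g)}}{L_{max}}\quad.
\]

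Finally I would recognize the bracketed quantity as $L_{max}(1 - H)$, by the identity established in the first step, so the right-hand side collapses to $\epsilon(1 - H)$, which is exactly the claim. The only point requiring a moment of care is that $L_1(f)$ and $L_2(g)$ are each individually nonnegative: this holds because every factor in their defining integrands is nonnegative (the priors $\pi_k$, the nonnegative functions $f$ and $g$, and the nonnegative weights $cu(c)$ and $(1 - c)u(c)$ on $[0, 1]$). This nonnegativity is what lets me write $\epsilon L_1(f) + \epsilon L_2(g) = \epsilon\pr{L_1(f) + L_2(g)}$ and reassemble the full loss $L = L_1(f) + L_2(g)$ without any cancellation. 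There is no genuine obstacle here; the argument is a two-term triangle inequality followed by a direct appeal to the preceding proposition, and it is the shared denominator $L_{max}$ that makes the relative error bound transfer cleanly from $L$ to $H$.
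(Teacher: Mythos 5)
Your proposal is correct and is essentially the paper's own argument: the paper declares this proposition ``immediate'' from the preceding one, and your write-up---cancel the shared $L_{max}$, apply the triangle inequality, invoke the previous proposition for each of $L_1$ and $L_2$, and recognize $L_1(f) + L_2(g) = L_{max}(1 - H)$---is exactly that immediate argument made explicit. One trivial slip: since $H - H' = \pr{L_1(f') - L_1(f) + L_2(g') - L_2(g)}/L_{max}$, your intermediate expression for $H - H'$ has the opposite sign, but this is immaterial once absolute values are taken.
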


In other words, if we can approximate $f$ and $g$, we can also approximate the $H$-measure.
Note that the guarantee is $\epsilon(1 - H)$, that is, the approximation is more accurate
when $H$ is closer to 1, that is, a classifier is accurate.

Next we will focus on estimating $g$.

\begin{proposition}
\label{prop:l2}
Assume $\epsilon > 0$.
Let $Y$ be the convex hull of an ROC curve. Let $Q$ be a subset of $Y$
such that for each $y_i$, there is $q_j \in Q$ such that
\begin{equation}
\label{eq:approx}
	q_j = y_i \quad\text{or}\quad q_{j2} \leq y_{i2} \leq q_{(j + 1)2} \leq (1 + \epsilon) q_{j2}\quad.
\end{equation}
Let $g$ be the function constructed from $Y$ as given by Eq.~\ref{eq:fapprox}, and let
$g'$ be a function constructed similarly from $Q$.
Then $g'$ is an $\epsilon$-approximation of $g$.
\end{proposition}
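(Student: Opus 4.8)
The plan is to recast both $g$ and $g'$ as the horizontal coordinate of the minimizer of one common linear functional, and then bound the gap pointwise in $c$. Recall that, as noted just before Eq.~\ref{eq:lgeneric}, for $c_i \le c < c_{i+1}$ the active vertex $y_i$ is precisely the minimizer over the hull vertices of $\Phi_c(y) = c\pi_1\pr{1 - y_1} + \pr{1 - c}\pi_2 y_2$. Hence $g(c)$ is the second coordinate of $\arg\min_{y \in Y}\Phi_c(y)$. Since $g'$ is built from $Q$ by the very same breakpoint rule (Eq.~\ref{eq:slope} applied to consecutive points of $Q$), $g'(c)$ is likewise the second coordinate of $\arg\min_{q \in Q}\Phi_c(q)$. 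Thus the claim reduces to comparing, for each fixed $c$, the minimizer of a single linear functional over $Y$ against its minimizer over the subset $Q$.

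The key structural fact I would establish next is that $\Phi_c$, evaluated along the hull vertices $y_0, \ldots, y_m$ in order, is unimodal (it decreases and then increases), with its unique valley at the active vertex $y_i$. This follows from convexity: traversing the upper hull the edge slopes decrease monotonically, so the directional change of the linear functional $\Phi_c$ flips sign exactly once. A subsequence of a valley-shaped sequence is again valley-shaped, so the minimizer of $\Phi_c$ over $Q$ is attained at whichever of the two $Q$-points bracketing $y_i$ gives the smaller value. Concretely, letting $q_{j}, q_{j+1}$ be the consecutive points of $Q$ with $q_{j2} \le y_{i2} \le q_{(j+1)2}$ guaranteed by Eq.~\ref{eq:approx}, the active $Q$-point at $c$ is either $q_{j}$ or $q_{j+1}$, so $g'(c) \in \set{q_{j2}, q_{(j+1)2}}$. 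Establishing this bracketing is the main obstacle: it is exactly what rules out the minimizer over $Q$ jumping to a vertex far from $y_i$. Equivalently, one can argue it from the interlacing of the recomputed breakpoints $c'_j$ with the original $c_i$, since the chord slope between two $Q$-points lies between the edge slopes it spans.

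With the bracketing in hand the rest is routine. If $y_i \in Q$ then by optimality the active $Q$-point is $y_i$ itself and $g'(c) = g(c)$. Otherwise Eq.~\ref{eq:approx} gives $q_{j2} \le y_{i2} \le q_{(j+1)2} \le \pr{1 + \epsilon} q_{j2}$. If the active $Q$-point is $q_{j}$, then $g'(c) = q_{j2} \le y_{i2} = g(c)$ and $g(c) - g'(c) = y_{i2} - q_{j2} \le \epsilon q_{j2} \le \epsilon g(c)$; if it is $q_{j+1}$, then $g'(c) = q_{(j+1)2} \ge y_{i2}$ and $g'(c) - g(c) \le q_{(j+1)2} - q_{j2} \le \epsilon q_{j2} \le \epsilon y_{i2} = \epsilon g(c)$. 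In every case $\abs{g(c) - g'(c)} \le \epsilon g(c)$, which is exactly the definition of $g'$ being an $\epsilon$-approximation of $g$; the boundary value $c = 1$ is immediate since both functions are set to $1$ there. I would finally note that the degenerate interval where $g(c) = 0$ is consistent, because Eq.~\ref{eq:approx} then forces the bracketing $Q$-points to have vanishing second coordinate as well.
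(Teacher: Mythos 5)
Your proof is correct, but it takes a genuinely different route from the paper's. The paper never leaves the combinatorial definition of $g$ and $g'$ via breakpoints: it first derives from convexity the interlacing property (its Eq.~\ref{eq:subsetslope}) that $c'_j \leq c_i$ and $c'_{j+1} \geq c_{i+1}$ whenever $q_j = y_i$, and then runs a four-way case analysis on the sign of $g(c) - g'(c)$ crossed with whether the adjacent gap in $Q$ exceeds the factor $1+\epsilon$; the two ``large gap'' cases are eliminated by contradiction, and the two ``small gap'' cases yield the bound. You instead lift both functions to minimizers of the common linear functional $\Phi_c(y) = c\pi_1\pr{1 - y_1} + \pr{1-c}\pi_2 y_2$, prove unimodality of $\Phi_c$ along the hull, and conclude that the minimizer over $Q$ must be one of the two $Q$-points bracketing the minimizer over $Y$, after which the $\epsilon$-bound is a two-line computation from Eq.~\ref{eq:approx}. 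Your bracketing lemma plays exactly the role of the paper's two contradiction cases, but unifies them and makes the geometric reason transparent (the subset minimizer cannot jump far from the true one); it also transfers verbatim to the companion statement for $f$ and to other nonnegative linear functionals. What the paper's route buys is self-containedness: it needs only the breakpoint formula of Eq.~\ref{eq:slope} and convexity, whereas you invoke the optimality characterization of the active hull vertex quoted from \citet{hand:09:alternative} --- although your unimodality argument, carried out in full, re-proves that characterization anyway. One detail you should nail down: at breakpoints the minimizer of $\Phi_c$ is not unique, so ``the second coordinate of the minimizer'' must be read with the convention matching the half-open intervals of Eq.~\ref{eq:fapprox} (both $g$ and $g'$ take the later, larger-coordinate vertex at a tie); with that convention, the strict monotonicity of $\Phi_c$ beyond the active vertex when $c < c_{i+1}$ shows your bracketing claim survives ties, so this is a presentational rather than a mathematical gap.
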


\begin{proof}

Let $(c_i)$ be the slope values computed from $Y$ using Eq.~\ref{eq:slope}, and let $(c'_i)$ be the slope
values computed from $Q$.

Due to convexity of $Y$,
the slope values have a specific property that we will use several times:
fix index $j$, and let $i$ be the index such that $y_i = q_j$. Then
\begin{equation}
\label{eq:subsetslope}
	c'_j \leq c_i \quad\text{and}\quad c'_{j + 1} \geq c_{i + 1}\quad.
\end{equation}

Assume $0 < c < 1$.
Let $i$ be an index such that $c_i \leq c < c_{i + 1}$, consequently $g(c) = y_{i2}$.
Similarly, let $j$ be an index such that $c'_j \leq c < c'_{j + 1}$, so that $g'(c) = q_{j2}$.
Let $a$ be an index such that $q_j = y_a$.

If $g(c) = g'(c)$, then we have nothing to prove.  Assume $g(c) < g'(c) = q_{j2}$.

Assume $q_{j2} > (1 + \epsilon)q_{(j - 1)2}$. Then Eq.~\ref{eq:approx} implies that $y_{a - 1} = q_{j - 1}$, and so
$c_a = c'_j \leq c < c_{i + 1}$. Thus, $i \geq a$, and $g(c) = g(c_i) \geq g(c_a) = g'(c'_j) = g'(c)$,
which is a contradiction.

Assume $q_{j2} \leq (1 + \epsilon)q_{(j - 1)2}$.
Let $b$ be an index such that $y_{b} = q_{j - 1}$.
Then Eq.~\ref{eq:subsetslope} implies
\[
    c_{b + 1} \leq c_j' \leq c < c_{i + 1}\quad.
\]
Thus, $b < i$ and so $q_{(j - 1)2} = y_{b2} = g(c_b) \leq g(c_i) = g(c)$.
This leads to
\[
	\abs{g'(c) - g(c)} = q_{j2} - g(c) \leq (1 + \epsilon)q_{(j - 1)2} - g(c) \leq (1 + \epsilon) g(c) - g(c) = \epsilon g(c),
\]
proving the proposition. 

Now, assume $g(c) > g'(c) = q_{j2}$.

Assume $q_{(j + 1)2} > (1 + \epsilon)q_{j2}$.
If $y_{a + 1} \notin Q$, then Eq.~\ref{eq:approx} leads to a contradiction.
Thus $y_{a + 1} = q_{j + 1}$ and so
$c'_j \leq c < c'_{j + 1} = c_{a + 1}$. Thus, $i \leq a$, and $g(c) = g(c_i) \leq g(c_a) = g'(c'_j) = g'(c)$,
which is a contradiction.

Assume $q_{(j + 1)2} \leq (1 + \epsilon)q_{j2}$.
Let $b$ be an index such that $y_{b} = q_{j + 1}$.
Then Eq.~\ref{eq:subsetslope} implies
\[
	 c_i \leq c < c'_{j + 1} \leq c_b\quad.
\]
Thus $i < b$ or $g(c) = y_{i2} \leq y_{b2} = q_{(j + 1)2}$.
This leads to
\[
	\abs{g(c) - g'(c)} \leq q_{(j + 1)2} - q_{j2} \leq (1 + \epsilon)q_{j2} - q_{j2} = \epsilon q_{j2} = \epsilon g'(c) < \epsilon g(c),
\]
proving the proposition.\qed
\end{proof}

A similar result also holds for $L_1(f)$. We omit the proof as it is very similar to the proof of Proposition~\ref{prop:l2}.

\begin{proposition}
Assume $\epsilon > 0$.
Let $Y$ be a convex hull of a ROC curve. Let $Q$ be a subset of $Y$
such that for each $y_i$, there is $q_j \in Q$ such that
\[
	q_j = y_i \quad\text{or}\quad 1 - q_{(j + 1)1} \leq 1 - y_{i1} \leq 1 - q_{j1} \leq (1 + \epsilon) (1 - q_{(j + 1)1})\quad.
\]
Let $f$ be the function constructed from $Y$ as given by Eq.~\ref{eq:fapprox}, and let
$f'$ be a function constructed similarly from $Q$.
Then $f'$ is an $\epsilon$-approximation of $f$.
\end{proposition}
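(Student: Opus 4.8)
The plan is to mirror the proof of Proposition~\ref{prop:l2} almost verbatim, exploiting the symmetry between the two coordinates. The only structural change is that $f(c) = 1 - y_{i1}$ is \emph{decreasing} in $c$, whereas $g(c) = y_{i2}$ was increasing: since the slope values $c_i$ increase with $i$ while $y_{i1}$ also increases with $i$, raising $c$ lowers $f$. Consequently the two cases of the earlier proof swap roles, and the relevant neighbour in $Q$ changes from $q_{j-1}$ to $q_{j+1}$ (and vice versa).

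First I would set up the same notation: let $(c_i)$ be the slopes computed from $Y$ via Eq.~\ref{eq:slope} and $(c'_i)$ those computed from $Q$. The convexity property Eq.~\ref{eq:subsetslope} --- if $q_j = y_i$ then $c'_j \leq c_i$ and $c'_{j+1} \geq c_{i+1}$ --- depends only on $Q \subseteq Y$ and convexity, so it carries over unchanged. Moreover, whenever $Q$ contains two consecutive hull points $y_a, y_{a+1}$, the slope between them is computed from the same pair in both lists, so the corresponding slope values coincide exactly.

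Fix $0 < c < 1$, and let $i, j, a$ be defined as before: $c_i \leq c < c_{i+1}$ (so $f(c) = 1 - y_{i1}$), $c'_j \leq c < c'_{j+1}$ (so $f'(c) = 1 - q_{j1}$), and $q_j = y_a$. If $f(c) = f'(c)$ there is nothing to prove. The case $f(c) < f'(c)$ forces $i > a$ and is handled with the \emph{right} neighbour $q_{j+1}$: if $1 - q_{j1} > (1+\epsilon)(1 - q_{(j+1)1})$ then the sandwich condition in the statement forces $y_{a+1} = q_{j+1}$, hence $c'_{j+1} = c_{a+1}$ and $c < c_{a+1}$, contradicting $i > a$; otherwise, letting $b$ be the index with $y_b = q_{j+1}$, Eq.~\ref{eq:subsetslope} gives $c < c'_{j+1} \leq c_b$, so $i < b$ and $f(c) = 1 - y_{i1} \geq 1 - q_{(j+1)1}$, from which $f'(c) - f(c) \leq (1+\epsilon)(1 - q_{(j+1)1}) - f(c) \leq \epsilon f(c)$. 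The symmetric case $f(c) > f'(c)$ forces $i < a$ and uses the \emph{left} neighbour $q_{j-1}$ in the same way, yielding $\abs{f(c) - f'(c)} \leq \epsilon f'(c) \leq \epsilon f(c)$.

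The routine but error-prone part --- the only thing I would watch carefully --- is the index bookkeeping under the reversed monotonicity: one must keep straight that in the $f$-form the sandwich reads $1 - q_{(j+1)1} \leq 1 - y_{i1} \leq 1 - q_{j1} \leq (1+\epsilon)(1 - q_{(j+1)1})$, so that the ``large'' endpoint is $q_j$ and the ``small'' endpoint is $q_{j+1}$, exactly opposite to the $g$-case. Once the neighbours are assigned consistently, every inequality transcribes directly from the proof of Proposition~\ref{prop:l2}, and the endpoint values $f(1) = 0$ and $f(0) = 1 - y_{01} = 1$ from Eq.~\ref{eq:fapprox} cause no difficulty.
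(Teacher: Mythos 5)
Your proposal is correct and is exactly what the paper intends: the paper omits this proof, stating only that it is ``very similar to the proof of Proposition~\ref{prop:l2},'' and your mirrored argument---with the roles of the left/right neighbours swapped to account for $f$ being decreasing, the sandwich now bounding $1-q_{j1}$ by $(1+\epsilon)(1-q_{(j+1)1})$, and Eq.~\ref{eq:subsetslope} carried over unchanged---is precisely that analogous proof, with the index bookkeeping done correctly.
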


The above propositions lead to the following strategy. Only use a subset of the
ROC curve to compute the $H$-measure; if we select the points carefully, then
the relative error will be less than $\epsilon$.

Let us now focus on estimating $L_2(g)$. Assume that we have the convex hull $Y = \enset{y_0}{y_m}$ of a
ROC curve stored in a search tree $T$. 
Consider an algorithm given in Algorithm~\ref{alg:subset} which we call \algsubset.

\begin{algorithm}
\caption{$\algsubset(u, p, q, \epsilon)$, outputs truncated part of the convex hull tree. Here, $u$ is the current node,
$p$ and $q$ are the minimum and the maximum coordinates of the subtree rooted at $u$, and $\epsilon$ is the approximation guarantee.}
\label{alg:subset}
\If {$q_2 > (1 + \epsilon) p_2$} {
	$z \define p + \weight{u} +  \cweight{\lc{u}}$\;
	Report $z$\;
	$\algsubset(\lc{u}, p, z, \epsilon)$\;
	$\algsubset(\rc{u}, z, q, \epsilon)$\;
}

\end{algorithm}

The pseudo-code traverses $T$,
and maintains two variables $p$ and $q$
that bound the points of the current subtree.
If $q_2 \leq (1 + \epsilon)p_2$,
then we can safely ignore the current subtree, otherwise we output the current root,
and recurse on both children. It is easy to see that $Q = \set{y_0, y_m} \cup \algsubset(r, 0, \cweight{r})$
satisfies the conditions of Proposition~\ref{prop:l2}.

A similar traverse can be also done in order to estimate $L_1(f)$.
However, we can estimate both values with the same subset by replacing
the if-condition with $q_2 > (1 + \epsilon) p_2 \textbf{ or } 1 - q_1 > (1 + \epsilon)(1 - p_1)$.

\begin{proposition}
\algsubset runs in $\bigO{(1 + \epsilon^{-1})\log^2 n}$ time.
\label{prop:subsettime}
\end{proposition}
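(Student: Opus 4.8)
The plan is to reduce the running time to the number of points that \algsubset reports, and then to bound that number. First I would note that the work performed at any single call is $\bigO{1}$: forming $z = p + \weight{u} + \cweight{\lc{u}}$ only reads the stored weight of $u$ and the stored cumulative weight of its left child and adds a constant number of two-dimensional tuples, and evaluating the guard $q_2 > (1+\epsilon)p_2$ is likewise constant. Since \algsubset recurses on both children exactly when it reports a point, the executed calls form a binary recursion tree $\mathcal{R}$ whose internal nodes are the reporting (``split'') calls and whose leaves are the calls that fail the guard. Hence the number of calls is $2S + 1$, where $S$ is the number of reported points, and the total time is $\bigO{S}$; it remains to show $S = \bigO{(1+\epsilon^{-1})\log^2 n}$.

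The key structural observation I would establish is that $\mathcal{R}$ is a subtree of the balanced hull tree $T$, so its depth is $\bigO{\log n}$, and that the intervals $[p_2, q_2]$ attached to the calls on any single level of $\mathcal{R}$ are interior-disjoint sub-intervals of $[0, n_2]$, ordered from left to right. This holds because each split of a call with bounds $(p, q)$ passes $(p, z)$ to the left child and $(z, q)$ to the right child, so the intervals form a laminar family and two calls on the same level never nest. Every split call additionally satisfies $q_2 > (1+\epsilon)p_2$ by the guard.

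The main step is then to bound, on a fixed level, the number of interior-disjoint intervals $[p^{(1)},q^{(1)}], [p^{(2)},q^{(2)}], \ldots$ inside $[0,n_2]$ each obeying $q^{(k)}_2 > (1+\epsilon)p^{(k)}_2$. At most one such interval can have $p^{(k)}_2 = 0$. For the rest, integrality of the non-normalized counts gives $p^{(k)}_2 \ge 1$, and disjointness together with the ratio bound forces the left endpoints to grow geometrically, $p^{(k+1)}_2 \ge q^{(k)}_2 > (1+\epsilon)p^{(k)}_2$, so $(1+\epsilon)^{k-1} \le p^{(k)}_2 \le n_2 \le n$. This bounds the number of split calls on one level by $1 + \log_{1+\epsilon} n = \bigO{1 + \epsilon^{-1}\log n}$. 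Summing over the $\bigO{\log n}$ levels of $\mathcal{R}$ yields $S = \bigO{(1+\epsilon^{-1})\log^2 n}$, and together with the first paragraph this proves the claim.

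The step I expect to be the main obstacle is the counting in the presence of the origin: hull points with second coordinate $0$ make the naive multiplicative telescoping over the intervals diverge, since a ratio against $p_2 = 0$ is unbounded. The resolution is to isolate the single zero-touching interval per level and to use the integrality bound $p^{(k)}_2 \ge 1$ for the remaining intervals, which is exactly what turns the telescoping product into the clean per-level count and supplies the extra logarithmic factor over the bare $\epsilon^{-1}$.
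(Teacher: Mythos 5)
Your proof is correct, but it follows a genuinely different decomposition than the paper's. The paper also (implicitly) reduces the cost to counting the reported nodes $V$, but it splits $V$ differently: it takes $W \subseteq V$ to be the reported nodes with \emph{two reported children}, shows that the sorted non-normalized second coordinates of the nodes in $W$ must grow by a factor greater than $1+\epsilon$ from one to the next (the same geometric-growth core as in your argument, using integrality and the ceiling $n$), so that $\abs{W} \in \bigO{(1 + \epsilon^{-1})\log n}$, and then charges every node of $V \setminus W$ to its youngest ancestor in $W$; since the charged nodes form at most two descending paths per element of $W$, each charge is $\bigO{\log n}$ by the tree height, giving the bound. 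You instead slice the recursion tree by depth and count split calls per level, using the fact that the intervals $\left[p_2, q_2\right]$ on one level are interior-disjoint, non-degenerate, and each satisfies $q_2 > (1+\epsilon)p_2$. Both arguments rest on the same three ingredients (balanced tree height, the guard's ratio condition, integrality of the counts), but your level-wise argument is arguably more elementary, and it uniformly covers the degenerate situation where no reported node has two reported children (where the paper's formula $\abs{V} = \sum_{v \in W} 1 + k(v)$ needs a separate remark, since then $W = \emptyset$ while $V$ can be a root path). What the paper's decomposition buys is modularity: it isolates the source of the second $\log n$ factor in the path lengths $k(v)$, which is precisely the quantity that the speed-up of Section~\ref{sec:speedup} improves to $k(v) \in \bigO{1}$ via weight-balanced trees, so the same proof skeleton is reused verbatim for \algsubsetalt; the level-wise count does not localize the inefficiency in a way that supports that refinement. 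One small correction to your write-up: the per-level count $1 + \log_{1+\epsilon} n$ should be bounded as $\bigO{(1+\epsilon^{-1})\log n}$ rather than $\bigO{1 + \epsilon^{-1}\log n}$ (for instance, with $\epsilon = \log n$ one has $\log_{1+\epsilon} n \approx \log n / \log\log n$, which exceeds $1 + \epsilon^{-1}\log n = \bigO{1}$); this changes nothing downstream, since multiplying by the $\bigO{\log n}$ levels still yields $\bigO{(1+\epsilon^{-1})\log^2 n}$.
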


\begin{proof}
Given a node $v$, let us write $T_v$ to mean the subtree rooted at $v$.
Write $p_v$ and $q_v$ to be the values of $p$ and $q$ when processing $v$.

Let $V$ be the reported nodes by \algsubset.
Let $W \subseteq V$ be a set of $m$ nodes that have two reported children.
Let $\enset{h_1}{h_m}$ be the non-normalized 2nd coordinate of nodes in $W$, ordered from smallest to largest.

Fix $i$ and
let $u$ and $v$ be the nodes corresponding to $h_i$ and $h_{i + 1}$.
Assume that $v \notin T_u$.
Let $r = \rc{u}$ be the right child of $u$.
Then $T_r \cap W = \emptyset$ as
otherwise $h_i$ and $h_{i + 1}$ would not be consecutive. We have
$h_{i + 1} \geq q_{r2} > (1 + \epsilon)p_{r2} = (1 + \epsilon) h_i$.

Assume that $v \in T_u$ which immediately implies that $u \notin T_v$.
Let $r = \lc{u}$ be the left child of $v$.
Then $T_r \cap W = \emptyset$, and we have  
$h_{i + 1} = q_{r2} > (1 + \epsilon)p_{r2} \geq (1 + \epsilon) h_i$.

In summary, $h_{i + 1} > (1 + \epsilon) h_i$.  Since $\set{h_i}$ are integers, we have $h_2 \geq 1$.
In addition, $h_m \leq n$ since the original data points (from which the ROC curve is computed) do not have
weights.

Consequently, $n \geq h_m \geq (1 + \epsilon)^{m - 2}$. Solving $m$ leads to $m \in \bigO{\log_{1 + \epsilon} n}\subseteq \bigO{(1 + \epsilon^{-1})\log n}$.

Given $v \in W$, define $k(v)$ to be the number of nodes in $V \setminus W$ that have $v$ as their youngest ancestor in $W$.
The nodes contributing to $k(v)$ form at most two paths starting from $v$. Since the height of the search tree is in $\bigO{\log n}$,
we have $k(v) \in \bigO{\log n}$.

Finally, we can bound $\abs{V}$ by
\[
	\abs{V} = \sum_{v \in W} 1 + k(v) \in \bigO{m \log n } \subseteq \bigO{(1 + \epsilon^{-1})\log^2 n},
\]
concluding the proof.\qed
\end{proof}

\subsection{Speed-up}
\label{sec:speedup}

It is possible to reduce the running time of \algsubset to $\bigO{\log^2 n +
\epsilon^{-1}\log n}$. We should point out that in practice \algsubset is probably
a faster approach as the theoretical improvement is relatively modest but at the same time
the overheads increase.

There are several ways to approach the speed-up. 
Note that the source of the additional $\log n$ term is that in
the proof of Proposition~\ref{prop:subsettime}, we have $k(v) \in \bigO{\log n}$.
The loose bound is due to the fact that we
are traversing a search tree balanced on tree height.
We will modify the search procedure, so that we can show that $k(v) \in \bigO{1}$ which will give us the desired outcome.
More specifically, we would like to traverse the hull using a search tree balanced using the 2nd coordinate.

The best candidate to replace the search tree for storing the convex hull
is a weight-balanced tree~\citep{nievergelt1973binary}.  Here, the subtrees are (roughly) balanced based on the
number of children. The problem is that this tree, despite its name, does not allow weights for
nodes. Moreover, the algorithm relies on the fact that the nodes have no
weights.

It is possible to extend the weight-balanced trees to handle the weights but such modification
is not trivial. Instead we demonstrate an alternative approach that is possible
using only stock search structures.

We will do this by modifying the search tree $T$ in which the nodes correspond to the partial hulls, see Figure~\ref{fig:hulltree}.

Let $Z$ be the current set of points and let $P = \set{(s, \ell) \in Z \mid \ell = 2}$ be the points with
label equal to 2. Set $N = Z \setminus P$. We store $P$ in a tree $T$ of bounded balance;
the points are only stored in leaves. Each leaf, say $u$, also stores all
points in $N$ that follow immediately $u$.
These points are stored in a standard search tree, say $L_u$, so
that we can join two trees or split them when needed.
Any points in $N$ that
are without a preceding point in $P$ are handled and stored separately.

Note that $L_u$ correspond to a vertical line when drawing the ROC curve.
Consequently, a point in the convex hull will always be the last point in $L_u$
for some $u$.  This allows us to define the weight $\weight{u}$ of a leaf $u$
in $T$ as $(m, 1)$, where $m$ is the number of nodes in $L_u$.  We now apply
the convex hull maintenance algorithm on $T$. As always, we maintain
the cumulative weights $\cweight{u}$ for the non-leaf nodes.

In order to approximate the $H$-measure we will use a variant of \algsubset,
except that we will traverse $T$ instead of traversing the hull.
The pseudo-code is given in Algorithm~\ref{alg:subsetalt}.
At each node we output the bridge,
if it is included in the final convex hull. The condition is easy to test, we just need to make sure
that it does not overlap with the previously reported bridges.
Since we output both points of the bridge, this may lead to duplicate points, but we can prune them as a post-processing step.
Finally, we truncate the traversal
if the subtree is sandwiched between two bridges that are close enough to each other.
It is easy to see that the output of \algsubsetalt satisfies the conditions in Proposition~\ref{prop:l2}
so we can use the output to estimate $L_2(g)$. In order to estimate $L_1(f)$ we duplicate the procedure, except we swap the labels
and negate the scores which leads to a mirrored ROC curve.

\begin{algorithm}

\caption{$\algsubsetalt(u, o, p, q, \epsilon)$, outputs truncated part of the convex hull tree. Here, $u$ is the current node,
$o$ are the minimum coordinates of the subtree rooted at $u$,
$p$ and $q$ are the coordinate bounds based on already reported bridges, and $\epsilon$ is the approximation guarantee.}
\label{alg:subsetalt}

\If {$q_2 > (1 + \epsilon) p_2$} {
	$x, y \define o + $ end points of the bridge related to $u$\;
	\uIf {$y_2 > q_2$ } {
		$\algsubsetalt(\lc{u}, o, p, q, \epsilon)$\;
	}
	\uElseIf{$p_2 > x_2$} {
		$\algsubsetalt(\rc{u}, o + \cweight{\lc{u}}, p, q, \epsilon)$\;
	}
	\Else {
		Report $x$, $y$\;
		$\algsubsetalt(\lc{u}, o, p, x, \epsilon)$\;
		$\algsubsetalt(\rc{u}, o + \cweight{\lc{u}}, y, q, \epsilon)$\;
	}
}

\end{algorithm}

\begin{proposition}
\algsubsetalt runs in $\bigO{\log^2 n + \epsilon^{-1}\log n}$ time.
\end{proposition}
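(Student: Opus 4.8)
The plan is to run the same accounting as in the proof of Proposition~\ref{prop:subsettime}, but to cash in on the fact that $T$ is now balanced on the second coordinate. First I would check that the work spent at each node is $\bigO{1}$: the arguments $o$, $p$, $q$ are carried along by the recursion, the shift $o + \cweight{\lc{u}}$ is a single lookup, and the two endpoints of the bridge of $u$ are stored with $u$ by the hull-maintenance structure, so evaluating the three tests in \algsubsetalt and reporting a bridge cost constant time. Hence the running time is proportional to the number of processed nodes $V$ (those entering the \textbf{if}), and it suffices to bound $\abs{V}$. The key structural observation is that every node of $V$ is either a node at which a bridge is reported or a single-child pass-through routing the recursion toward such a node; consequently $V$ is exactly the union of the root-to-reported-bridge paths, i.e.\ the Steiner tree in $T$ of the set $W$ of reporting nodes together with the root.

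Next I would bound $\abs{W}$ verbatim as before. A node is split only when its window satisfies $q_2 > (1 + \epsilon)p_2$, so reading the reported bridges in order of their second coordinate, these coordinates grow by a factor of at least $1 + \epsilon$ between consecutive reports. Since they are integers in $[1, n]$, there are $\bigO{\log_{1 + \epsilon} n} \subseteq \bigO{(1 + \epsilon^{-1})\log n}$ of them, giving $\abs{W} \in \bigO{(1 + \epsilon^{-1})\log n}$. This is the origin of the $\epsilon^{-1}\log n$ term.

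The genuinely new part is to show that the Steiner tree of $W$ has size $\bigO{\log^2 n + \epsilon^{-1}\log n}$ rather than the $\bigO{\abs{W}\log n} = \bigO{(1 + \epsilon^{-1})\log^2 n}$ that the crude bound $k(v) \in \bigO{\log n}$ of Proposition~\ref{prop:subsettime} would yield. Here I would exploit the bounded-balance invariant directly on the second coordinate: a bridge that must resolve a window of multiplicative width $1 + \epsilon$ located around the second-coordinate value $h$ can only be reported once the subtree span has shrunk to that width, i.e.\ at depth $\bigO{\log(n/(\epsilon h))}$, the level where the subtree size matches the window. This pins reporting nodes to a depth determined by $h$, and it is exactly what upgrades the bound to $k(v) \in \bigO{1}$: consecutive reporting nodes, and their least common ancestor, sit at essentially the same depth, so the descent between them is $\bigO{1}$ rather than $\bigO{\log n}$. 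Counting level by level, at level $\ell$ the geometrically spread windows meet only $\bigO{\min(2^{\ell},\epsilon^{-1})}$ subtrees that still carry a deep-enough bridge; summing over the $\bigO{\log n}$ levels gives the $\epsilon^{-1}\log n$ contribution, while the two boundary descents to the extreme hull points and the $\bigO{\log n}$ high-level splits separating the geometric scales absorb the additive $\bigO{\log^2 n}$.

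The hard part will be making the depth estimate ``$\,$reporting depth $=\bigO{\log(n/(\epsilon h))}$$\,$'' rigorous from the bounded-balance ratio, since the bridge extent of a node is not literally its subtree span and one must argue that the window $[p,q]$ straddles a reportable bridge within $\bigO{1}$ levels of the matching depth. This is precisely the step where the weight balance on the second coordinate is indispensable and where the analysis of Proposition~\ref{prop:subsettime} has to be strengthened rather than merely transcribed; the geometric-growth count of $\abs{W}$ and the per-node $\bigO{1}$ bookkeeping are routine by comparison.
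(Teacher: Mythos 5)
Your skeleton---constant work per visited node, count the reporting nodes, then bound the pass-through overhead---matches the paper's, and you correctly identify weight balance on the second coordinate as the decisive ingredient. But the two quantitative steps that would turn this into a proof are, respectively, wrong as stated and missing. First, your count of reporting nodes rests on the claim that the second coordinates of consecutively reported bridges grow by a factor of $1+\epsilon$. That inference is invalid: the test $q_2 > (1+\epsilon)p_2$ constrains the \emph{window}, not the bridge positions, and after a report the child windows $[p,x]$ and $[y,q]$ can each remain geometrically wide while the bridges inside them sit arbitrarily close together. Concretely, the bridge at a node $v$ always straddles the split coordinate $o(\rc{v})$ of its subtree, so two nested reports can have coordinates as close as the span of a small subtree allows; for instance, reports at coordinates $\sigma$ and $\sigma+2$ with $2/\epsilon < \sigma < 3/\epsilon - 1$ pass all the tests, and $(\sigma+2)/\sigma < 1+\epsilon$. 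The paper avoids this by never arguing about reported coordinates directly: it introduces the superset $U = \set{u : \ell(u) > \epsilon o(u)}$ of all expanded nodes (using that a reporting node's window lies in the span of its subtree, so $\ell(v) \geq q_2 - p_2 > \epsilon p_2 \geq \epsilon o(v)$), and proves geometric growth only for the right-child origins $o(\rc{v})$ of the \emph{branching} nodes $W$ of $U$; in the example above the second node is precisely a non-branching node of $U$, so no contradiction arises.

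Second, the step you defer as ``the hard part'' is exactly the content of the proposition; without it you have a plan, not a proof. Moreover, your depth-pinning heuristic---that a bridge near coordinate $h$ must be reported at depth $\bigO{\log(n/(\epsilon h))}$---is not true: nothing pins a report to a depth, since a report happens wherever the bridge of the current node happens to fall inside the current window, which can occur high or low in the tree. The paper's actual mechanism is different and purely local. It splits on $\epsilon$ versus the balance constant $\alpha$ of Eq.~\ref{eq:wbt}: if $\epsilon > \alpha/2$, then $\epsilon^{-1} \in \bigO{1}$ and the crude bound $k(v) \in \bigO{\log n}$ from Proposition~\ref{prop:subsettime} already yields $\bigO{\log^2 n}$; if $\epsilon \leq \alpha/2$, it considers a path $u_1, \ldots, u_j$ of non-branching $U$-nodes, notes that $u_1$ has a child $w \notin U$ and deduces $1 + \ell(u_1) \leq 2\alpha^{-1}(1 + \epsilon o(u_1))$, while Eq.~\ref{eq:wbt} forces $1+\ell$ to decay by a factor $1-\alpha$ per level and membership of $u_j$ in $U$ bounds it from below; comparing the two gives $j \leq 1 + \log_{1-\alpha}(\alpha/2) \in \bigO{1}$, hence $k(v) \in \bigO{1}$. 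Your level-by-level count gestures at the same conclusion but supplies no mechanism for it; to complete your argument you would need precisely this kind of inequality chain relating subtree size, origin, and the $U$-membership threshold, together with the case analysis that produces the additive $\bigO{\log^2 n}$ term.
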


\begin{proof}
Let $T$ be the tree traversed by \algsubsetalt.
Let us write $T_v$ to be the subtree rooted at $v$.

Let $n(v)$ be the number of nodes in $T_v$, and let
$\ell(v)$ be the number of leaves in $T_v$. Note that $n(v) = 2\ell(v) + 1$.

Let $v$ be a child of $u$.
Since $T$ is a weight-balanced tree~\citep{nievergelt1973binary}, we have 
\begin{equation}
\label{eq:wbt}
	\alpha \leq \frac{1 + \ell(v)}{1 + \ell(u)} = \frac{1 + 1+ 2\ell(v)}{1 + 1 + 2\ell(u)} = \frac{1 + n(v)}{1 + n(u)} \leq 1 - \alpha, \quad\text{where}\quad \alpha = \frac{1 - \sqrt{2}}{2}\quad.
\end{equation}

Let us write $o(v)$ to be the 2nd origin coordinate of $T_v$.
Note that $o(v)$ corresponds to the variable $o_2$ in \algsubsetalt when $v$ is processed.

Let $V$  be the set of nodes whose bridges we output, and
let $U$ be the set of nodes in $T$ for which $\ell(u) > \epsilon o(u)$.

We will prove the claim by showing that $V \subseteq U$ and $\abs{U} \in \bigO{\log^2 n + \epsilon^{-1}\log n}$.

To prove the first claim, let $v \in V$.
Let $p$ and $q$ match the variables of \algsubsetalt when $v$ is visited.
The points $p$ and $q$ correspond to the two leaves of $T_v$.
In other words, $q_{2} - p_{2} \leq \ell(v)$, and $o(v) \leq p_{2}$.
Thus,
\[
	\ell(v) \geq q_{2} - p_{2} > \epsilon p_{2} \geq \epsilon o(v)\quad.
\]
This proves that $v \in U$.

To bound $\abs{U}$,
let $W \subseteq U$ be a set of $m$ nodes that have two children in $U$.

Define $\enpr{h_1}{h_m} = \pr{o(\rc{v}) \mid v \in W}$ to be the sequence of the (non-normalized)
2nd coordinates of the right children of nodes in $W$, ordered from the smallest to the largest. 

Fix $i$. 
Let $u \in W$ be the node for which $o(\rc{u}) = h_i$, and
let $v \in W$ be the node for which $o(\rc{v}) = h_{i + 1}$.

Assume that $h_i \leq o(v)$. Since $v \in W$, we have
\[
	h_{i + 1} = o(\rc{v}) = o(v) + \ell(\lc{v}) > o(v) + \epsilon o(\lc{v}) = (1 + \epsilon) o(v) \geq (1 + \epsilon)h_i\quad.
\]
Assume that $h_i > o(v)$. Then $u \in T_{\lc{v}}$, and consequently $v \notin T_{\rc{u}}$.
Thus, $T_{\rc{u}} \cap W = \emptyset$ as otherwise $h_i$ and $h_{i + 1}$ are not consecutive. Since $\rc{u} \in U$, we have
\[
	h_{i + 1} \geq o(\rc{u}) + \ell(\rc{u}) \geq (1 + \epsilon)o(\rc{u}) = (1 + \epsilon)h_i\quad.
\]

In summary, we have $h_{i + 1} > (1 + \epsilon) h_i$.  Note that $h_1 \geq 1$.
In addition, $h_m \leq n$ since the original data points (from which the ROC curve is computed) do not have
weights.

Consequently, $n \geq h_m \geq (1 + \epsilon)^{m - 1}$. Solving $m$ leads to 
\[
	m \in  \bigO{\log_{1 + \epsilon} n}\subseteq \bigO{(1 + \epsilon^{-1})\log n}\quad.
\]

Given $v \in W$, define $k(v)$ to be the number of nodes in $V \setminus W$ that have $v$ as their youngest ancestor in $W$.
The nodes contributing to $k(v)$ form at most two paths starting from $v$. Since the height of the search tree is in $\bigO{\log n}$,
we have $k(v) \in \bigO{\log n}$.

Assume that $\epsilon > \alpha / 2$ (recall that $\alpha = \frac{1}{2}(1 - \sqrt{2})$). Then
\[
	\abs{V} = \sum_{v \in W} 1 + k(v) \in \bigO{m \log n } \subseteq \bigO{(1 + \epsilon^{-1})\log^2 n} \subseteq \bigO{\log^2 n},
\]
proving the proposition.

Assume that $\epsilon \leq \alpha / 2$. 
Let $v \in W$ with $k(v) > 0$.
Recall that the nodes corresponding to $k(v)$ form at most two paths.
Let $u_1, \ldots, u_j$ be such a path.

Let $w$ be a child of $u_1$ for which $w \notin U$.
We have
\begin{align*}
	1 + \ell(u_1) & \leq \alpha^{-1}(1 + \ell(w))            & \text{(Eq.~\ref{eq:wbt})}\\
	& \leq \alpha^{-1}(1 + \epsilon o(w))                    & \text{($w \notin U$)}\\
	& \leq \alpha^{-1}(1 + \epsilon(o(u_1) + \ell(u_1)))     & \text{($w$ is a child of $u_1$)}\\
	& \leq \alpha^{-1}(1 + \epsilon o(u_1)) + \ell(u_1)/2,   & \text{($\epsilon \leq \alpha/2$)}\\
\end{align*}
which in turns implies $1 + \ell(u_1) \leq 2 \alpha^{-1}(1 + \epsilon o(u_1))$.

Applying Eq.~\ref{eq:wbt} iteratively and the fact that $u_j \in U$, we see that
\begin{align*}
	1 + \epsilon o(u_1) & \leq 1 + \epsilon o(u_j)                                            & \text{($u_j$ is a child of $u_1$)}\\
	                    & < 1 + \ell(u_j)                                                     & \text{($u_j \in U$)}\\
						& \leq (1 - \alpha)^{j - 1}( 1 + \ell(u_1))                           & \text{(Eq.~\ref{eq:wbt} applied $j - 1$ times)}\\
						& \leq (1 - \alpha)^{j - 1} 2 \alpha^{-1}(1 + \epsilon o(u_1))\quad.
\end{align*}
Solving for $j$ leads to
\[
	j \leq 1 + \log_{1 - \alpha} \alpha / 2 \in O(1),
\]
and consequently $k(v) \in O(1)$. We conclude that
\[
	\abs{V} = \sum_{v \in W} 1 + k(v) \in \bigO{m} \subseteq \bigO{(1 + \epsilon^{-1})\log n},
\]
proving the proposition.
\qed
\end{proof}

\section{Experimental evaluation}\label{sec:exp}

In this section we present our experimental evaluation. Our primary focus is computational time. 
We implemented our algorithm using C++.\footnote{Code is available at \url{https://version.helsinki.fi/dacs/}} 
For convenience, we refer our algorithms as \algauc, \alghexact, and \alghapprox.

We used 3 datasets obtained from UCI repository\footnote{\url{http://archive.ics.uci.edu/ml/datasets.php}}:
\dtname{APS} contains APS failure in Scania trucks,
\dtname{Diabetes} contains medical information of diabetes patients, here the label is whether the patient has been readmitted to a hospital,
\dtname{Dota2} describes the character selection and the outcome of a popular competitive online computer game.

We imputed the missing values with the corresponding means, and encoded the
categorical features as binary features. We then proceeded to train a logistic
regressor using 1/10th of the data, and used remaining data as testing data. 
When computing the $H$-measure we used beta distribution with $\alpha = \beta = 2$.

In our first experiment, we tested maintaining AUC as opposed to computing AUC
by maintaining the points sorted and computing the AUC from the sorted list~\citep{brzezinski:17:pauc}.
Given a sequence $z_1, \ldots, z_n$ of scores and labels, we compute AUC for $z_1, \ldots, z_i$
for every $i$. In the dynamic algorithm, this is done by simply adding the latest point to the existing
structure. We record the time difference after 1000 additions.

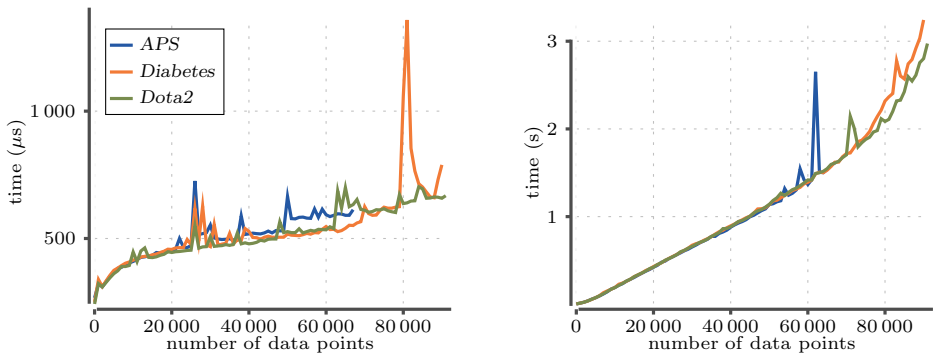
\begin{figure}
\begin{tikzpicture}
\begin{axis}[xlabel={number of data points}, ylabel= {time ($\mu$s)},
    width = 6.2cm,
    scaled x ticks = false,
    cycle list name=yaf,
    yticklabel style={/pgf/number format/fixed},
    xticklabel style={/pgf/number format/fixed},
    no markers,
	legend pos = {north west}
    ]
\addplot table[x expr = {1000*\coordindex}, y index = 0, header = false] {aps_comp_auc.txt}; 
\addplot table[x expr = {1000*\coordindex}, y index = 0, header = false] {diabetes_comp_auc.txt}; 
\addplot table[x expr = {1000*\coordindex}, y index = 0, header = false] {dota2_comp_auc.txt}; 
\legend{\dtname{APS}, \dtname{Diabetes}, \dtname{Dota2}}
\pgfplotsextra{\yafdrawaxis{1000}{92000}{250}{1400}}
\end{axis}
\end{tikzpicture}
\hfill
\begin{tikzpicture}
\begin{axis}[xlabel={number of data points}, ylabel= {time (s)},
    width = 6.2cm,
    scaled x ticks = false,
    cycle list name=yaf,
    yticklabel style={/pgf/number format/fixed},
    xticklabel style={/pgf/number format/fixed},
    no markers,
    ]
\addplot table[x expr = {1000*\coordindex}, y expr = {\thisrowno{2} / 1000000}, header = false] {aps_comp_auc.txt}; 
\addplot table[x expr = {1000*\coordindex}, y expr = {\thisrowno{2} / 1000000}, header = false] {diabetes_comp_auc.txt}; 
\addplot table[x expr = {1000*\coordindex}, y expr = {\thisrowno{2} / 1000000}, header = false] {dota2_comp_auc.txt}; 
\pgfplotsextra{\yafdrawaxis{1000}{92000}{0}{3}}
\end{axis}
\end{tikzpicture}

\caption{Running time for computing AUC 1000 times as a function of the number
of data points. Left figure: our approach. Right figure: baseline method by
computing AUC from the maintained, sorted data points.
Note that the time units are different.}
\label{fig:auc}
\end{figure}

\begin{figure}
\begin{tikzpicture}
\begin{axis}[xlabel={sliding window size}, ylabel= {time ($m$s)},
    width = 6.2cm,
    scaled x ticks = false,
    cycle list name=yaf,
    ymin = 0,
    xmin = 0,
    scaled y ticks = false,
    yticklabel style={/pgf/number format/fixed},
    xticklabel style={/pgf/number format/fixed},
    no markers,
	legend pos = {south east}
    ]
\addplot table[x expr = {1000 + 1000*\coordindex}, y expr = {\thisrowno{0} / 1000}, header = false] {aps_comp_auc_slide.txt}; 
\addplot table[x expr = {1000 + 1000*\coordindex}, y expr = {\thisrowno{0} / 1000}, header = false] {diabetes_comp_auc_slide.txt}; 
\addplot table[x expr = {1000 + 1000*\coordindex}, y expr = {\thisrowno{0} / 1000}, header = false] {dota2_comp_auc_slide.txt}; 
\legend{\dtname{APS}, \dtname{Diabetes}, \dtname{Dota2}}
\pgfplotsextra{\yafdrawaxis{0}{46000}{0}{12}}
\end{axis}
\end{tikzpicture}
\hfill
\begin{tikzpicture}
\begin{axis}[xlabel={sliding window size}, ylabel= {time (s)},
    width = 6.2cm,
    ymin = 0,
    xmin = 0,
    scaled x ticks = false,
    scaled y ticks = false,
    cycle list name=yaf,
    yticklabel style={/pgf/number format/fixed},
    xticklabel style={/pgf/number format/fixed},
    no markers,
    ]
\addplot table[x expr = {1000 + 1000*\coordindex}, y expr = {\thisrowno{1} / 1000000}, header = false] {aps_comp_auc_slide.txt}; 
\addplot table[x expr = {1000 + 1000*\coordindex}, y expr = {\thisrowno{1} / 1000000}, header = false] {diabetes_comp_auc_slide.txt}; 
\addplot table[x expr = {1000 + 1000*\coordindex}, y expr = {\thisrowno{1} / 1000000}, header = false] {dota2_comp_auc_slide.txt}; 
\pgfplotsextra{\yafdrawaxis{0}{46000}{0}{8}}
\end{axis}
\end{tikzpicture}

\caption{Running time for computing AUC $10\,000$ times in a sliding window as a function of the size
of the sliding window. Left figure: our approach. Right figure: baseline method by
computing AUC from the maintained, sorted data points.
Note that the time units are different.}
\label{fig:aucslide}
\end{figure}
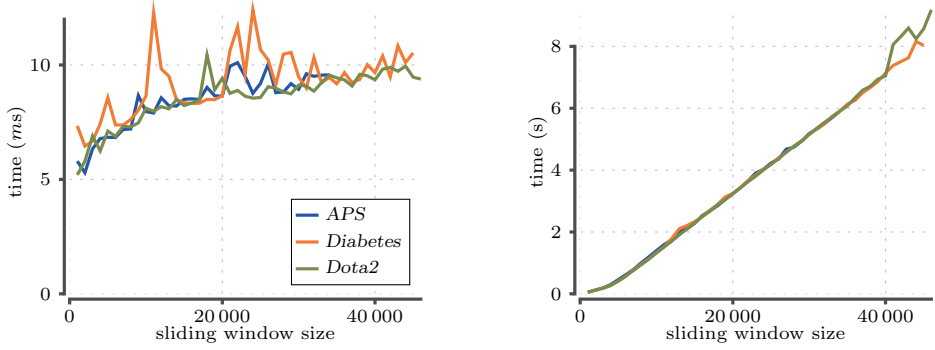

From the results shown in Figure~\ref{fig:auc} we see that \algauc is about $10^4$ times faster, though we should point out
that the exact ratio depends heavily on the implementation. More importantly, the needed time increases logarithmically for \algauc 
and linearly for the baseline. The spikes in running time of \algauc are due to self-balancing search trees.

Next, we compare the running time of computing AUC in a sliding window. We use the same baseline as in the previous
experiment, and record the running time after sliding a window for $10\,000$ steps. From the results shown in Figure~\ref{fig:aucslide} we see that \algauc
is faster than the baseline by several orders of magnitude with the needed time increasing logarithmically for \algauc
and linearly for the baseline.

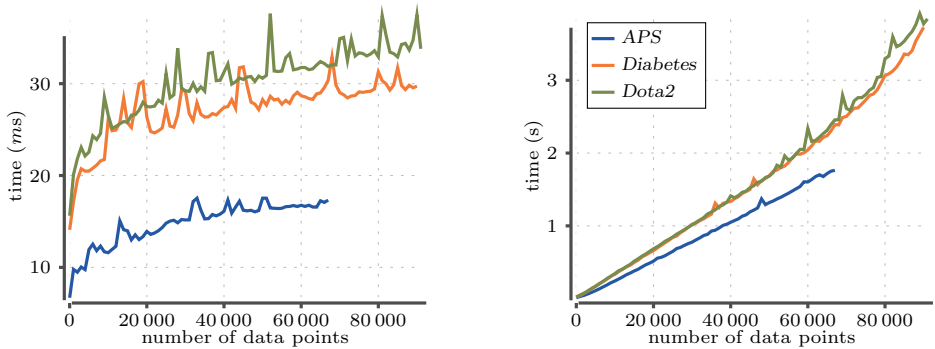
\begin{figure}
\begin{tikzpicture}
\begin{axis}[xlabel={number of data points}, ylabel= {time ($m$s)},
    width = 6.2cm,
    scaled x ticks = false,
    cycle list name=yaf,
    yticklabel style={/pgf/number format/fixed},
    xticklabel style={/pgf/number format/fixed},
    no markers,
	legend pos = {north west}
    ]
\addplot table[x expr = {1000*\coordindex}, y expr = {\thisrowno{0} / 1000}, header = false] {aps_comp_hex.txt}; 
\addplot table[x expr = {1000*\coordindex}, y expr = {\thisrowno{0} / 1000}, header = false] {diabetes_comp_hex.txt}; 
\addplot table[x expr = {1000*\coordindex}, y expr = {\thisrowno{0} / 1000}, header = false] {dota2_comp_hex.txt}; 
\pgfplotsextra{\yafdrawaxis{1000}{92000}{6.6}{35}}
\end{axis}
\end{tikzpicture}
\hfill
\begin{tikzpicture}
\begin{axis}[xlabel={number of data points}, ylabel= {time (s)},
    width = 6.2cm,
    scaled x ticks = false,
    cycle list name=yaf,
    yticklabel style={/pgf/number format/fixed},
    xticklabel style={/pgf/number format/fixed},
    no markers,
	legend pos = {north west}
    ]
\addplot table[x expr = {1000*\coordindex}, y expr = {\thisrowno{2} / 1000000}, header = false] {aps_comp_hex.txt}; 
\addplot table[x expr = {1000*\coordindex}, y expr = {\thisrowno{2} / 1000000}, header = false] {diabetes_comp_hex.txt}; 
\addplot table[x expr = {1000*\coordindex}, y expr = {\thisrowno{2} / 1000000}, header = false] {dota2_comp_hex.txt}; 
\legend{\dtname{APS}, \dtname{Diabetes}, \dtname{Dota2}}
\pgfplotsextra{\yafdrawaxis{1000}{92000}{0}{3.7}}
\end{axis}
\end{tikzpicture}
\caption{Running time for computing the $H$-measure 1000 times as a function of the number of data points. Left figure: our approach. Right figure: baseline method computing from sorted data points.
Note that the time units are different.}
\label{fig:hexact}
\end{figure}

\begin{figure}
\begin{tikzpicture}
\begin{axis}[xlabel={sliding window size}, ylabel= {time ($m$s)},
    width = 6.2cm,
    scaled x ticks = false,
    cycle list name=yaf,
    yticklabel style={/pgf/number format/fixed},
    xticklabel style={/pgf/number format/fixed},
    no markers,
	ymin = 0,
	xmin = 0,
	legend pos = {north west}
    ]
\addplot table[x expr = {1000+1000*\coordindex}, y expr = {\thisrowno{0} / 1000}, header = false] {aps_comp_hex_slide.txt}; 
\addplot table[x expr = {1000+1000*\coordindex}, y expr = {\thisrowno{0} / 1000}, header = false] {diabetes_comp_hex_slide.txt}; 
\addplot table[x expr = {1000+1000*\coordindex}, y expr = {\thisrowno{0} / 1000}, header = false] {dota2_comp_hex_slide.txt}; 
\pgfplotsextra{\yafdrawaxis{1000}{46000}{0}{650}}
\end{axis}
\end{tikzpicture}
\hfill
\begin{tikzpicture}
\begin{axis}[xlabel={sliding window size}, ylabel= {time (s)},
    width = 6.2cm,
    scaled x ticks = false,
    cycle list name=yaf,
    yticklabel style={/pgf/number format/fixed},
    xticklabel style={/pgf/number format/fixed},
    no markers,
	xmin = 0,
	legend pos = {north west}
    ]
\addplot table[x expr = {1000+1000*\coordindex}, y expr = {\thisrowno{1} / 1000000}, header = false] {aps_comp_hex_slide.txt}; 
\addplot table[x expr = {1000+1000*\coordindex}, y expr = {\thisrowno{1} / 1000000}, header = false] {diabetes_comp_hex_slide.txt}; 
\addplot table[x expr = {1000+1000*\coordindex}, y expr = {\thisrowno{1} / 1000000}, header = false] {dota2_comp_hex_slide.txt}; 
\legend{\dtname{APS}, \dtname{Diabetes}, \dtname{Dota2}}
\pgfplotsextra{\yafdrawaxis{1000}{46000}{0}{15}}
\end{axis}
\end{tikzpicture}
\caption{Running time for computing $H$-measure $10\,000$ times in a sliding window as a function of the size
of the sliding window. 
Left figure: our approach. Right figure: baseline method computing from sorted data points.
Note that the time units are different.}
\label{fig:hexactslide}
\end{figure}
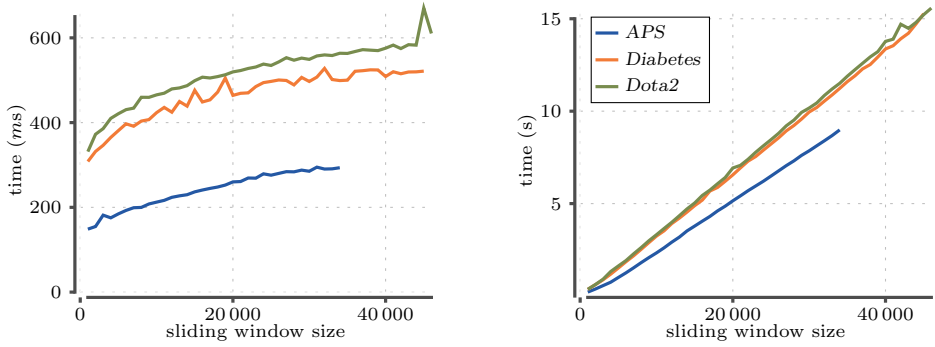

We repeat the same experiments but now we compare maintaining the $H$-measure against computing it from scratch from sorted data points.
From the results shown in Figures~\ref{fig:hexact}~and~\ref{fig:hexactslide} we see that \alghexact is
about $10$--$10^2$ times faster, and the time grows polylogarithmically for
\alghexact and linearly for the baseline. Similarly, the spikes in running
time of \alghexact are due to self-balancing search trees. Interestingly, \alghexact is faster for \dtname{APS}
than for the other datasets. This is probably due to the imbalanced labels, making the ROC curve relatively skewed, and the convex hull small.

\begin{figure}
\begin{tikzpicture}
\begin{axis}[xlabel={approximation guarantee, $\epsilon$}, ylabel= {time (s)},
    width = 6.2cm,
    scaled x ticks = false,
    cycle list name=yaf,
    yticklabel style={/pgf/number format/fixed},
    xticklabel style={/pgf/number format/fixed},
    no markers,
	ymax = 10,
	legend pos = {north west}
    ]
\addplot table[x index = 0, y expr = {\thisrowno{1} / 1000000}, header = false] {aps_happrox.txt}; 
\addplot table[x index = 0, y expr = {\thisrowno{1} / 1000000}, header = false] {diabetes_happrox.txt}; 
\addplot table[x index = 0, y expr = {\thisrowno{1} / 1000000}, header = false] {dota2_happrox.txt}; 
\pgfplotsextra{\yafdrawaxis{0}{2}{2}{10}}
\end{axis}
\end{tikzpicture}\hfill
\begin{tikzpicture}
\begin{axis}[xlabel={approximation guarantee, $\epsilon$}, ylabel={$(\text{estimate} - \text{exact}) / \text{exact}$},
    width = 6.2cm,
    scaled x ticks = false,
    scaled y ticks = false,
    cycle list name=yaf,
    yticklabel style={/pgf/number format/fixed, /pgf/number format/precision=3},
    xticklabel style={/pgf/number format/fixed},
    no markers,
	legend pos = {south west}
    ]
\addplot table[x index = 0, y expr = {(\thisrowno{2} - 0.481244) / 0.481244}, header = false] {aps_happrox.txt}; 
\addplot table[x index = 0, y expr = {(\thisrowno{2} - 0.238772) / 0.238772}, header = false] {diabetes_happrox.txt}; 
\addplot table[x index = 0, y expr = {(\thisrowno{2} - 0.057003) / 0.057003}, header = false] {dota2_happrox.txt}; 
\legend{\dtname{APS}, \dtname{Diabetes}, \dtname{Dota2}}
\pgfplotsextra{\yafdrawaxis{0}{2}{-0.03}{0}}
\end{axis}
\end{tikzpicture}

\caption{Approximative $H$-measure as a function of approximation guarantee $\epsilon$. Left figure: running time. Right figure: absolute difference to the correct value.}
\label{fig:happrox}
\end{figure}
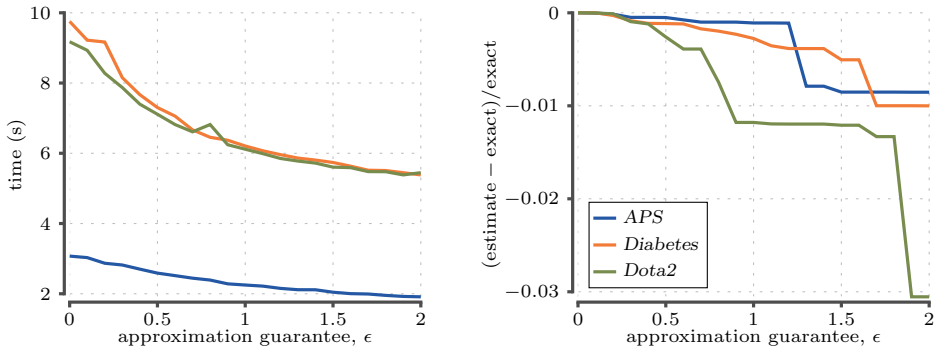

In our final experiment we use approximative $H$-measure, without the speed-up described in Section~\ref{sec:speedup}.
Here, we measure the \emph{total} time to compute the $H$-measure for $z_1, \ldots, z_i$ for every $i$ as a function of $\epsilon$.
Figure~\ref{fig:happrox} shows the running time as well as the difference to the correct score when using the whole data.

Computing the $H$-measure from scratch required roughly
1 minute for \dtname{APS}, and 2.5 minutes for \dtname{Diabetes} and \dtname{Dota2}.
On the other hand, we only need 10 seconds to obtain accurate result, and as we increase $\epsilon$,
the running time decreases. As we increase $\epsilon$, the error grows but only modestly (up to 3\%), with \alghapprox
underestimating the exact value.

\section{Conclusions}
\label{sec:conclusions}
In this paper we considered maintaining AUC and the $H$-measure under addition and deletion.
More specifically, we show that we can maintain AUC in $\bigO{\log n}$ time, and the $H$-measure
in $\bigO{\log^2 n}$ time, assuming that the class priors are obtained from the testing data.
We also considered the case, where the class priors are not obtained from the testing data.
Here, we can approximate the $H$-measure in $\bigO{(\log n + \epsilon^{-1}) \log n}$ time.

We demonstrate empirically that our algorithms, \algauc and \alghexact, provide
significant speed-up over the natural baselines where we compute the score from
the sorted, maintained data points.

When computing the $H$-measure the biggest time saving factor is maintaining
the convex hull, as the hull is typically smaller than all the data points used
for creating the ROC curve. Because of the smaller size of the hull, the tricks employed
by \alghapprox, provide less of a speed-up. Still, for larger values of $\epsilon$, the speed-up
can be almost 50\%.

\bibliographystyle{splncsnat}
\bibliography{bibliography,references}

\appendix
\section{Binary search for computing the bridge}

Algorithm~\ref{alg:bridge} contains a pseudo-code for finding the bridge of two convex hulls.
The algorithm is a variation of the search described by~\citet{overmars1981maintenance}.
The main modification here is obtaining the ROC coordinates of the points.

Due to notational convenience, we write $p \preceq q$, where $p$ and $q$ are two points in a plane,
if the slope of $p$ is smaller than or equal to the slope of $q$.

\begin{algorithm}

\caption{$\textsc{Bridge}(C_1, C_2)$, given two partial convex hulls $C_1$ and $C_2$, constructs a joint convex hull $C$ by finding
the end point of $C_1$ section and the starting point of $C_2$ section in $C$. Returns the end points and the coordinates of the bridge.}
\label{alg:bridge}

$u_1 \define $ root of $C_1$\;
$u_2 \define $ root of $C_2$\;

$s_1 \define \shift{u_1}$\;
$s_2 \define \shift{u_2} + \cdiff{u_1}$\;
$\sigma \define$ $x$-coordinate of  $\cdiff{\troot{C_1}})$\;

\While {$u_1$ or $u_2$ changes} {
	$(x, y) \define $ intersection of lines $s_1 + \diff{\nxt{u_1}} \times t$ \And $s_2 + \diff{u_2} \times t$\;
	\uIf {$\lc{u_1}$ \And $\diff{u_1} \preceq s_2 - s_1$} {
		$s_1 \define s_1 + \shift{\lc{u_1}} - \shift{u_1}$\;
		$u_1 \define \lc{u_1}$\;
	}
	\uElseIf {$\rc{u_2}$ \And $s_2 - s_1 \preceq \diff{\nxt{u_2}}$} {
		$s_2 \define s_2 + \shift{\rc{u_2}}$\;
		$u_2 \define \rc{u_2}$\;
	}
	\uElseIf {$\rc{u_1}$ \And $s_2 - s_1 \prec \diff{\nxt{u_1}}$ \And\\\quad\quad$(\Not\  \lc{u_2}$ \Or $s_2 - s_1 \preceq \diff{u_2}$ \Or $x \leq \sigma)$} {
		$s_1 \define s_1 + \shift{\rc{u_1}}$\;
		$u_1 \define \rc{u_1}$\;
	}
	\ElseIf {$\lc{u_2}$ \And $\diff{u_2} \prec s_2 - s_1$ \And\\\quad\quad$(\Not\ \rc{u_1}$ \Or $\diff{\nxt{u_1}} \preceq s_2 - s_1$ \Or $x \geq \sigma)$} {
		$s_2 \define s_2 + \shift{\lc{u_2}} - \shift{u_2}$\;
		$u_2 \define \lc{u_2}$\;
	}
}

\Return $u_1$, $u_2$, $s_1$, $s_2$\;

\end{algorithm}

\end{document}